\documentclass[research, 10pt]{fcs}

\usepackage{amsthm}
\usepackage{amsmath}
\usepackage{amsfonts}
\usepackage{cleveref}
\usepackage{algorithm}
\usepackage{algorithmicx}
\usepackage{algpseudocode}
\usepackage{mathtools}

\usepackage{multirow}
\usepackage{array}
\usepackage{booktabs}
\usepackage{hhline}

\usepackage{adjustbox}
\usepackage{makecell}
\usepackage{enumitem}
\usepackage{booktabs}
\usepackage{verbatim}
\usepackage{mathrsfs}

\usepackage{graphicx}

\usepackage{textcomp}
\usepackage{stfloats}
\usepackage{url}

\usepackage{pifont}

\newtheorem{prop}{Proposition}

\DeclareMathOperator*{\argmax}{arg\,max}

\usepackage{xcolor}

\newcommand{\thickhline}{%
    \noalign {\ifnum 0=`}\fi \hrule height 1pt
    \futurelet \reserved@a \@xhline
}

\setlength{\emergencystretch}{3em}

\title{Enhancing Job Salary Prediction with Disentangled Composition Effect Modeling: A Neural Prototyping Approach}

\author[1]{Yang Ji}
\author*[1]{Ying Sun}
\author[2]{Hengshu Zhu}

\address[1]{Thrust of Artificial Intelligence, The Hong Kong University of Science and Technology (Guangzhou), Guangzhou 511458, Guangdong, China}
\address[2]{Computer Network Information Center, Chinese Academy of Sciences, Beijing 100083, China}

\fcssetup{
  received       = {April 8th, 2025},
  accepted       = {month dd, yyyy},
  corr-email     = {yings@hkust-gz.edu.cn},
}

\begin{abstract}
In the era of the knowledge economy, understanding how job skills influence salary is crucial for promoting recruitment with competitive salary systems and aligned salary expectations. Despite efforts on salary prediction based on job positions and talent demographics, there still lacks methods to effectively discern the set-structured skills' intricate composition effect on job salary. While recent advances in neural networks have significantly improved accurate set-based quantitative modeling, their lack of explainability hinders obtaining insights into the skills' composition effects. Indeed, model explanation for set data is challenging due to the combinatorial nature, rich semantics, and unique format. To this end, in this paper, we propose a novel intrinsically explainable set-based neural prototyping approach, namely \textbf{LGDESetNet}, for explainable salary prediction that can reveal disentangled skill sets that impact salary from both local and global perspectives. Specifically, we propose a skill graph-enhanced disentangled discrete subset selection layer to identify multi-faceted influential input subsets with varied semantics. Furthermore, we propose a set-oriented prototype learning method to extract globally influential prototypical sets. The resulting output is transparently derived from the semantic interplay between these input subsets and global prototypes. Extensive experiments on four real-world datasets demonstrate that our method achieves superior performance than state-of-the-art baselines in salary prediction while providing explainable insights into salary-influencing patterns.
\end{abstract}

\keywords{Data Mining, Job Salary Prediction, Set-based Modeling, Explainable Machine Learning}

\begin{document}

\section{Introduction}
With the rise of the knowledge economy, the skills that individuals possess play a crucial role in determining job compensation and salary levels~\cite{ng2014conservation, dix2015trade, burstein2017international}. Understanding the salary-influence of skills is vital for effective recruitment~\cite{sun2021market, ternikov2022soft, lovaglio2018skills}. It helps employers develop competitive salary systems to attract a diverse talent pool. Simultaneously, it guides job seekers align their salary expectations to find suitable jobs. However, despite significant efforts to predict salaries based on job positions~\cite{meng2018intelligent,meng2022fine} and talent demographics~\cite{lazar2004income}, there remains a shortage of effective methods to accurately model salary-influencing patterns based on skill compositions.

Indeed, the market's numerous skills create a vast combination space, with different jobs requiring unique skill sets. Analyzing their salary-influencing patterns, given the supply-demand relationship among various roles, is a complex set modeling problem. 
In recent years, neural networks have advanced set-based modeling by capturing intricate set-outcome relationships.
Existing progress, represented by DeepSets~\cite{zaheer2017deep} and Set Transformer~\cite{lee2019set}, typically learn element embeddings and employ set-pooling layers to extract permutation-invariant set representations in an end-to-end way. This brings high expressiveness in capturing element interactions and remarkable prediction accuracy. However, the lack of intrinsic explainability limits their ability to offer clear insights into the effects of skill compositions.

As a data structure with unique characteristics and internal collaboration mechanisms, sets pose unique challenges to model explanations.
An illustrative example is provided in \Cref{fig:motivation}. Specifically,
\textbf{(A)} The collective effect of set elements, rather than individual impacts, significantly influences outcomes.
For instance, as AI developers, talents possessing a comprehensive skill package tend to get higher salaries. 
Since a single skill cannot solely justify the compensation, common explanation paradigms~\cite{sundararajan2017axiomatic, sun2021market} focusing on individual input impacts struggle to provide meaningful insights into salary determination. 
Symbolizing and quantifying the collective semantic influences within a set presents a notable challenge.
\textbf{(B)} A set can be a combination of multi-faceted semantics from different interactions among skills. For instance, a front-end development position in an AI company may also require some AI knowledge, leading to a higher salary than a typical front-end role but not as high as an AI specialist.
Intuitively, the salary is jointly determined by the reflected extent and the overall market value of these global semantics.
However, disentangling the global semantics from various mixtures and modeling their interaction to influence the entire set's outcome is a challenging task.
\textbf{(C)} 
Unlike structured grid data like images, sets are discrete, conceptual, and unordered.
These high-level features arise from semantic interactions among elements, rather than spatial arrangements. Thus, existing methods~\cite{nauta2021neural, chen2019looks, li2018deep}, which rely on continuous representations for spatial visualization, face limitations with sets. Providing quantitative explanations for unique set data remains unsolved.

\begin{figure}
  \centering
  \includegraphics[width=\linewidth]{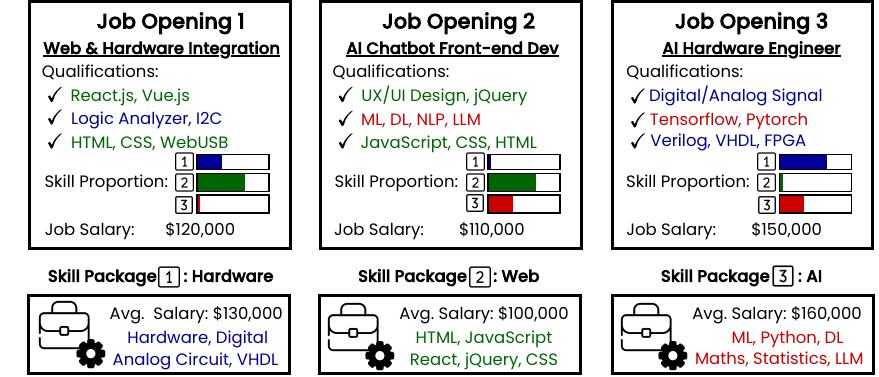}
  \caption{An illustrative example on skill-salary influence.}
  \vspace{-3mm}
  \label{fig:motivation}
\end{figure}

To tackle these challenges, we propose a novel, intrinsically explainable set-based neural prototyping approach, namely \textbf{L}ocal-\textbf{G}lobal Subset \textbf{D}is\textbf{E}ntangling \textbf{Set} \textbf{Net}work, called \textbf{LGDESetNet}, which models outcomes based on learning multi-faceted input subsets and influential prototypical sets.
With LGDESetNet, we realize explainable and accurate salary prediction while disentangling salary-influential skill sets from both an instance-based local view and a task-based global view. In particular, to learn symbolic set influence, LGDESetNet models discrete and permutation-invariant set-formed patterns in a differentiable end-to-end manner. 
Specifically, we propose a disentangled discrete subset selection layer to identify local subsets reflecting different semantics. In particular, an element co-occurrence graph density regularization is designed for enhancing dense and disentangled extraction. Next, we propose a set-oriented discrete prototype learning method to extract globally influential sets, which generates outputs in a transparent way based on the semantic interactions between the extracted subsets and global prototypes. To flexibly process external job contexts (e.g., time, city), we design two optional fusion layers to integrate skill-wise and set-wise side information in an explainable way.
Finally, we conduct extensive experiments on four real-world job salary datasets. 
The experimental results clearly show that LGDESetNet outperforms state-of-the-art self-explainable models and salary prediction models. Quantitative and qualitative analyses underscore our method's superior explainability ability to identify influential skill sets in the labor market and quantify their impacts on salary.

Our key contributions are summarized as follows:
1) We provide a novel neural framework to effectively disentangle and quantify skill sets' influence on salary from job posting data.
2) To the best of our knowledge, we are the first to develop a self-explainable deep set model that provides outcome-influential set-viewed explanations.
3) We propose a novel graph-enhanced subset selection layer and set-oriented prototypical learning method for distilling local and global influential skill sets.
4) Extensive experiments on real-world datasets, including a user study, present our model's superior salary prediction performance alongside novel explainability capability.

\section{Related Work}
This paper's related work falls into three categories: data-driven salary prediction, set modeling, and neural network explainability. We will discuss each in detail and highlight the differences between our work and existing studies.

\subsection{Data-driven Salary Prediction}
In recent years, growing efforts have been made to explore salary-influential factors from a data-driven perspective~\cite{quan2022salary}. Despite examining various factors~\cite{autor2013growth, dix2015trade, burstein2017international, sun2025market}, quantitative salary prediction remains challenging.
The evolution of machine learning offers promising solutions for dynamic and effective salary prediction.
For example, \textit{Lazar et al.}~\cite{lazar2004income} adopted the classical Support Vector Machine (SVM) on talent demographics data to predict incomes.
\textit{Meng et al.}~\cite{meng2018intelligent} proposed a holistic matrix factorization approach for salary benchmarking based on company and job position information. \textit{Meng et al.}~\cite{meng2022fine} further designed a nonparametric Dirichlet process to capture the latent skill distributions.
In~\cite{sun2021market}, a cooperative composition deep model is introduced for salary prediction by market-aware skill valuation.
\textit{Fang et al.}~\cite{fang2023recruitpro} developed a pre-training method for recruitment tasks, achieving BERT-comparable performance in the downstream salary prediction task.
Unlike existing works, our study focuses on the skill composition effect on job salary and proposes a self-explainable deep set model to provide explainable insights into skill-based salary-influencing patterns. Moreover, we holistically consider both local and global job contexts' influences (e.g, skill proficiency, base city) on skill-salary prediction.

\subsection{Set Modeling}
In data mining, frequent pattern mining~\cite{li2022frequent} is used to discover frequently appearing sets within datasets. 
Neural network-based methods face challenges with set-structured inputs due to their unordered nature.
DeepSets~\cite{zaheer2017deep} is the prominent work, which proposed a unified framework based on the set-pooling method to compress any unordered set into a single embedding vector. Based on this framework, \cite{murphy2018janossy} designed a learnable Janossy pooling layer to enhance the model flexibility.
Another research line investigated the attention mechanism to discover the interactions between set elements~\cite{lee2019set, maron2020learning, hirsch2021trees}. 
SetNorm~\cite{zhang2022set} further proposed a novel set normalization to increase model depths.
Some domain-specific studies can be also viewed as special cases of learning sets, including point cloud~\cite{qi2017pointnet, xu2018spidercnn, wu2024point, chen2024pointgpt}, graphs~\cite{maron2018invariant, kim2022pure}, hypergraph~\cite{chien2021you} and images~\cite{yao2024weakly, gordon2019convolutional}. 
However, few studies have focused on explaining the set-based modeling process.
The closest work to ours is Set-Tree~\cite{hirsch2021trees}, which provides instance-wise explanations by measuring the frequency each item occurs in the attention-sets. In contrast, our work provides both multi-faceted local explanations and global prototypical examples with explicit quantitative importance weights.

\subsection{Neural Network Explainability}

Neural network explainability aims to uncover important local or global data patterns for model predictions.
One research line is post-hoc explainability, which employs additional models like gradient-based~\cite{selvaraju2017grad, ancona2017towards} and perturbation-based~\cite{lundberg2017unified} methods to explain black-box models. However, such post-hoc methods fail to provide intrinsic model insights in many cases~\cite{rudin2019stop, schroder2023post, sundararajan2017axiomatic}.
Another approach is building self-explainable models, leveraging regularization~\cite{lemhadri2021lassonet} or attention mechanisms~\cite{miao2022interpretable} to identify critical input patterns.
Moreover, some explored prototype learning~\cite{chen2019looks} or case-based reasoning~\cite{kolodner1992introduction}, mimicking human problem-solving by matching representative examples~\cite{van2010example}.
PrototypeDNN \cite{li2018deep} and ProtoPNet \cite{chen2019looks}are pioneering works in prototype-based image classification. Subsequent work \cite{ma2024looks, wang2021interpretable, rymarczyk2021protopshare, pach2025lucidppn} aims to make explanations more concise and understandable.
Prototype learning has been applied to sequences~\cite{ming2019interpretable, hong2020interpretable, rajagopal2021selfexplain}, graphs~\cite{zhang2022protgnn, seo2024interpretable}, imitation learning~\cite{jiang2023fedskill}, traffic flow~\cite{fauvel2023lightweight}, and reinforcement learning~\cite{sun2025market, kenny2022towards}.
As aforementioned, explaining the set-based modeling process remains underexplored. This work focuses on processing set-structured data by disentangling and prototyping relationships among set entities.

\begin{table}[t]
 	\caption{Major notations in this paper.}
	\label{tab:notation}
	\begin{tabular}{lp{6.5cm}}
		\hline
		Symbol & Description\\
		\hline
		$X$ & The input skill set.\\
            $x_i$ & The $i$-$th$ skill in the skill set $X$. \\
            $lv_i$ & The skill-wise external information of the skill $x_i$. \\
            $C$ & The set-wise external information. \\
            $Y$ & The salary outcome. \\
            $U$ & The universal set of all the appeared skills. \\
            $N_{total}$ & The total number of all appeared skills. \\
            $E_{total}$ & The embedding table of all appeared skills. \\
            $\phi$ & The skill encoding function. \\
            $H$   & The number of views. \\
            $\alpha^{\text{lv}}_i$ &  The skill-wise importance weight of the skill $x_i$. \\
            $S_h$ & The $h$-th extracted subset. \\
            $E^\text{s}_h$ & The embeddings of the $h$-th subset. \\
            $G_{skill}$ & The skill co-occurrence graph. \\
            $P_i$ & The $i$-$th$ prototype. \\
            $M$ & The number of prototypes. \\
            $\gamma^\text{s}_i$ & The multi-hot selection vector of $P_i$. \\
            $\gamma^\text{lv}_i$ & The skill-wise weights of $P_i$. \\
            $\gamma^\text{sal}_i$ & The salary effect weight of $P_i$. \\
            $E^\text{p}_i$ & The continuous embeddings of $P_i$. \\
            $\mathcal{T}$ & The transformation layer. \\
            $Z^\text{s}_h$ & The projected embeddings of $S_h$. \\
            $Z^\text{p}_i$ & The projected embeddings of $P_i$. \\
		\hline
	\end{tabular}
	\vspace{-2mm}
\end{table}

\section{Problem Formulation}
\label{sec:data-description}

We utilize real-world job posting data~\cite{JobData,JobData2}.
Each job posting consists of the job salary range and descriptive external information, such as company, city, time, and skill level requirements. To mine the interaction among skill compositions, we formulate each job posting as a tuple $(X, C, Y)$, where $X$ represents a skill set, $C$ represents set-wise external information (e.g., job contexts such as time, city), and $Y$ represents its salary outcome. In particular, $X$ is an unordered set of skills $X = \{x_1, x_2, ..., x_n\}$, where $n$ indicates the number of skills. Each skill $x_i$ is drawn from a universal set $U$. $N_{total}=|U|$ indicates the total number of skills across the dataset. In many real-world scenarios, each skill $x_i$ is possibly associated with skill-wise external information $lv_i$ (e.g., proficiency level). 

Formally, we define the task of this paper as a skill-based explainable salary prediction problem. Using training data, it aims to train a neural network model $f$ that can predict salary outcomes based on skill sets with associated external information and provides explainable insights into the reasoning process. Additionally, $f$ should be permutation invariant for set processing, implying that $f(X) = f(\pi(X))$ for any permutation $\pi$ of the indices ${1, 2, \ldots, |X|}$ and all $X$ in the input domain. \Cref{tab:notation} summarizes the major notations.

\section{Methodology}
In this section, we first present the overview of LGDESetNet. Then, we introduce the details of each module and our training procedure for efficient set-oriented prototype learning.

\begin{figure*}[t]
    \begin{center}
        \includegraphics[width=\linewidth]{}
        \caption{The architecture of LGDESetNet.}
        \label{fig:model}
    \end{center}
\end{figure*}

\subsection{Framework Overview}
As illustrated in~\Cref{fig:model}, the LGDESetNet model consists of two main components: disentangled discrete subset selection and prototypical set learning.
Our model aims to provide explainable salary prediction by disentangling and quantifying the influence of skill sets on salaries from both local and global perspectives.
To symbolize the collective semantic factors, LGDESetNet learns $M$ global representative skill compositions as prototypes $\{P_1, P_2, .., P_M\}$, each represented as a set of skills.
Each prototype has specific semantics reflected by a cluster of influencing skill subsets on the market and has independent effects on salaries varying across different job contexts.
Given an input skill set, which can reflect multiple views of skill compositions, LGDESetNet identifies $H$ key subsets $\{S_1, S_2, ..., S_H\}$, representing diverse semantic views.
These subsets are subsequently compared with global prototypes to determine pairwise similarity scores. 
The overall match of each prototype to the input is weighted and combined into the salary prediction.

To achieve this, the disentangled discrete subset module introduces a differentiable multi-view subset selection network. Through the Gumbel attention mechanism, it can explicitly represent each extracted subset as a discrete set instead of continuous embeddings. 
We further introduce a weight calibrator to measure the importance of each skill within the input set based on its external information.
A co-occurrence graph density-based regularizer is proposed to encourage extracting semantically meaningful skill subsets.
To symbolize global semantics, the prototypical set learning module employs a set-oriented prototypical part layer with an efficient embedding-projection training algorithm. This strategy enables learning discrete skill sets that align the distribution of local semantics of key subsets recognized from the inputs. We also introduce a contextual salary influence module to dynamically capture the varied salary effects of each prototype under different job contexts. Notably, the skill-wise and set-wise fusion layers are optional and can be flexibly integrated based on external job contexts.

\subsection{Disentangled Discrete Subset Selection}
\label{model:MSSN}
The disentangled discrete subset selection component comprises a differentiable multi-view subset selection network, a skill-wise weight calibrator, and co-occurrence graph density regularization.

\subsubsection{Multi-view Subset Selection Network}
As demonstrated in our motivating example, real-world samples can reflect multiple skill semantics that influence the outcome from different views. In this case, regular prototype-based methods that learn representative patterns from the entire input space can reduce the model's performance and obscure the model's explainability by conflating semantics. To address this limitation, LGDESetNet disentangles the input into multiple subsets as its multi-view representations.

First, LGDESetNet uses a permutation-invariant encoder to embed each set's overall semantic features. The input skill vector $V_X \in \mathbb{R}^n$ can be in any permutation order of inner skills, where $n$ denotes the size of $X$.
The representation $\phi(x_i)$ for each skill in $U$ is learned as an embedding table $E_{total} \in \mathbb{R}^{N_{total} \times d}$, where $d$ represents the embedding dimension. Next, a permutation-invariant pooling function $\text{pool}(\cdot)$ aggregates the input set's skill representations into the set representation. %
We introduce the attention mechanism to capture the interactions of skills to different semantics, the skill and set's representations are projected into $H$ distinct subspaces. 
For the $h$-th view, each row in $V_h^\text{K}, V_h^\text{V} \in \mathbb{R}^{n \times d_h}$ denotes the key and value vectors of the corresponding skill in $V_X$, and $V_h^\text{Q} \in \mathbb{R}^{d_h}$ denotes the query projected from the set embedding. Using the dot-product attention mechanism~\cite{vaswani2017attention}, we calculate the similarity of each skill in each view as: 
\begin{equation}
    V^\text{E}_h = V_h^\text{Q}(V_h^\text{K})^T/\sqrt{d_h}.
\end{equation} 
Each entry $V^\text{E}_h \in \mathbb{R}^{N}$ models the interaction weight of the $i$-th skill with the $h$-th view.

While activation functions can be used to project interaction weights to the range [0, 1], the continuous interaction score hinders the network's ability to model in an explicit binary manner. To address this issue, we introduce Gumbel-Sigmoid~\cite{gumbel1948statistical, geng2020does} to enable differentiable skill selection. Specifically, we incorporate Gumbel noise with the Sigmoid function and output the selected skills for each view's subset as:
\begin{equation}
    \begin{split}
        A^\text{s}_h = \frac{\exp((V^\text{E}_h + G_0)/\tau)}{\exp((V^\text{E}_h + G_0)/\tau) + \exp((G_1)/\tau)}, \\
        u_i \sim \text{Uniform}(0, 1), i\in \{0, 1\},\\
        G_i = -\log(-\log(u_i)), i\in \{0, 1\},
    \end{split}
\end{equation}
where $G_0$ and $G_1$ are two i.i.d Gumbel noises, the temperature $\tau$ controls the smoothness of the sampling function. $A^\text{s}_h$ becomes a multi-hot vector for selecting a subset $S_h$ when $\tau$ approaches $0$ and becomes a continuous vector as $\tau \rightarrow \infty$. In the training loop, we gradually reduce $\tau$ to learn the multi-hot representation of the selected subset.

\subsubsection{Skill-wise External Information Fusion}
In practice, skills are usually associated with external information, offering nuanced insights into its impact on salary determination.
For instance, while ``Algorithms (Understand)'' and ``Algorithms (Specialist)'' share identical skill semantics, the latter likely exerts a greater influence within the skill set.
This motivates us to design a weight calibrator for explicitly modeling these skill-wise dynamic relationships.
We adjust the weights of each skill's embeddings to discern the varied importance of skills within a set while preserving original skill semantics.
Specifically, we learn the dense embeddings from the external information $lv_i$ of the skill $x_i$ and project it into a weight value $\alpha^{\text{lv}}_i$ ranging in $[0, 1]$ with the Sigmoid function.
In this way, we can get fused embedding $E^{\text{s}}_{h}$ of the extracted subset $S_h$ by integrating the skill embeddings weighted by their skill-wise importance before the pooling operation:
\begin{equation}
    \begin{split}
        E^{\text{s}}_{h} & = \text{pool}\left( \{ \alpha^{\text{lv}}_ i  \phi(x_i) | x_i \in S_h \}\right).
    \end{split}
\end{equation}

\begin{figure}[t]
    \centering
    \begin{minipage}[t]{0.45\linewidth}
        \centering
        \includegraphics[width=\linewidth]{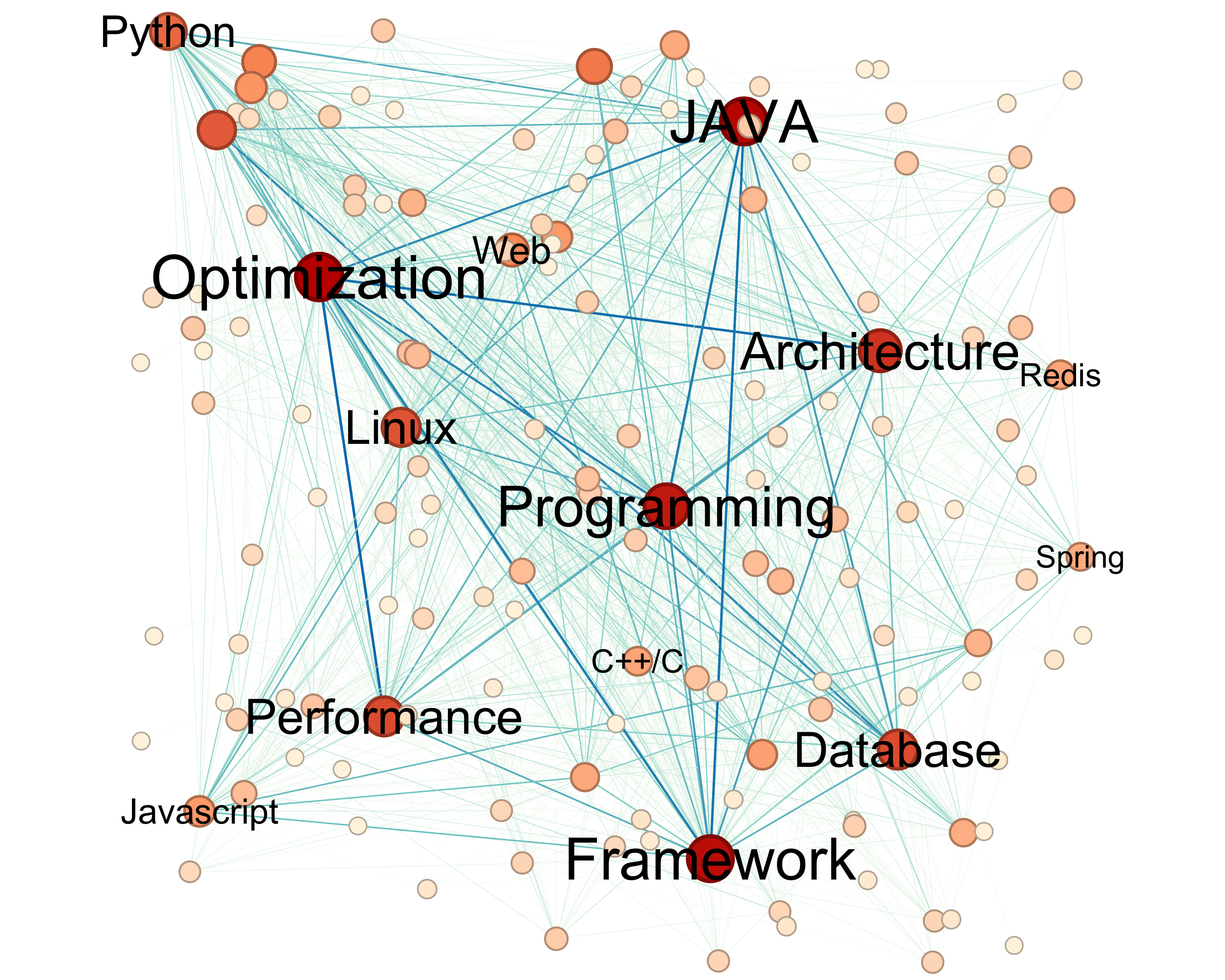}
        \caption*{(a) IT}
    \end{minipage}
    \hspace{0.03\linewidth}
    \begin{minipage}[t]{0.45\linewidth}
        \centering
        \includegraphics[width=\linewidth]{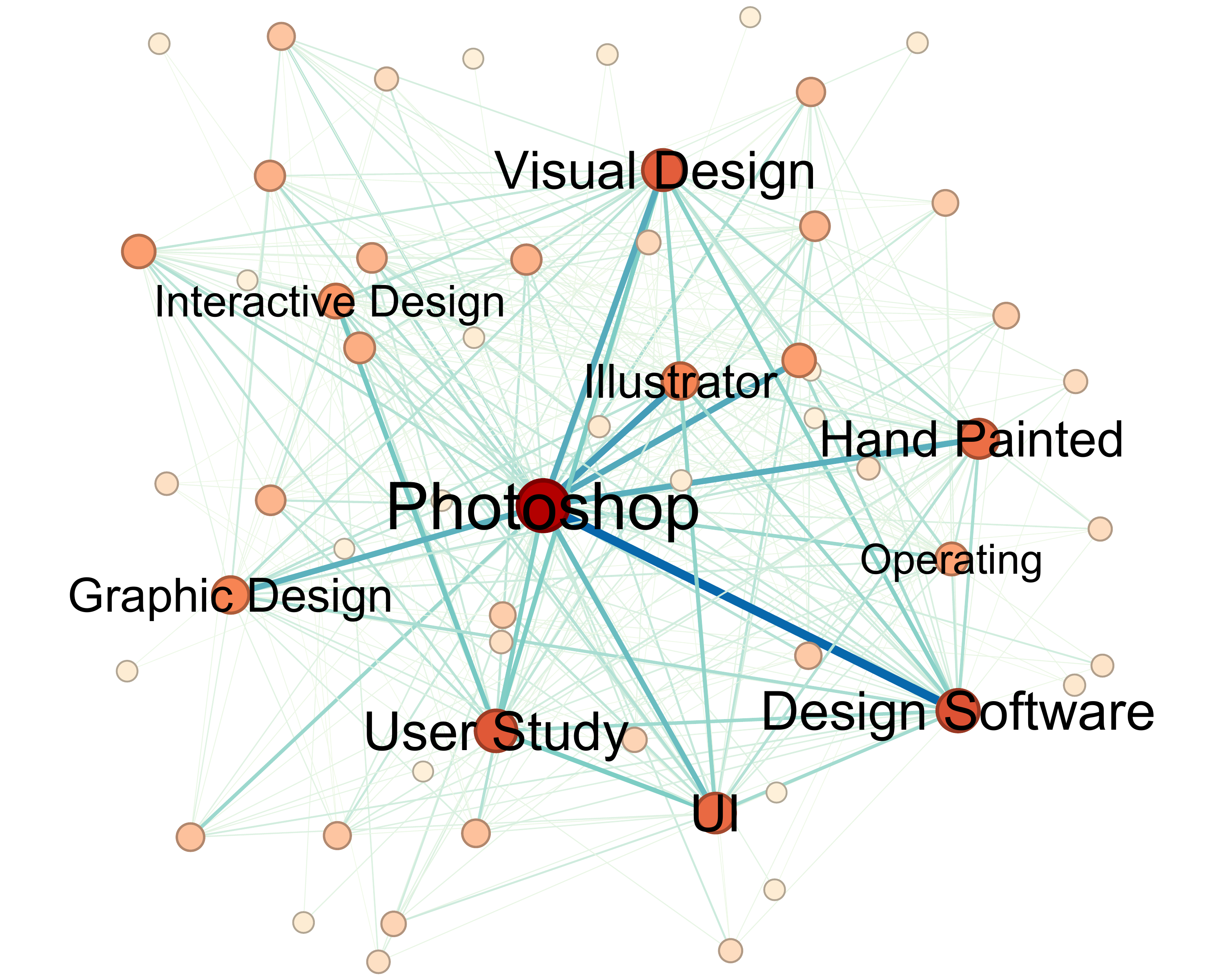}
        \caption*{(b) Designer}
    \end{minipage}
    \caption{Illustrative examples of co-occurrence graphs. Low-weight edges are eliminated to ease presentation.}
    \label{fig:graph}
\end{figure}

\subsubsection{Co-occurrence Graph Density Regularization}
To enhance explainability and robustness, extracted skill subsets should reflect meaningful real-world interactions.
Yet, focusing solely on prediction performance could generate subsets cluttered with irrelevant skills, reducing semantic clarity. 
To address this, we introduce a regularization approach aimed at refining these subsets to meet key intuitive criteria: (i) Semantic coherence within subsets, (ii) Distinct semantic boundaries between different subsets, and (iii) Avoidance of trivial subset selections (neither fully inclusive nor empty skill sets).

Observing that skills frequently appeared together in job postings likely share semantic connections (e.g., ``Machine Learning'', ``Algorithms'', and ``Python'' often cluster in AI roles), we leverage global semantic interactions inferred from such co-occurrence patterns. Direct co-occurrence analysis, however, is computationally expensive and hampered by data sparsity. %
Therefore, we model these relationships using a skill co-occurrence graph $G_{skill} = \langle U, L \rangle$, drawing from skill pair associations across the dataset, where $U$ is the universal set of skills and $L$ represents the links indicating co-occurrence frequencies between skill pairs.
As exemplified in~\Cref{fig:graph}, each graph node corresponds to a skill, and a link \(l_{i,j}\) between nodes \(x_i\) and \(x_j\) signifies their connectivity, quantified by their co-occurrence frequency \(w_{l_{i, j}}\).
To quantify semantic connections within subsets, we design a graph density-based~\cite{khuller2009finding} regularization, focusing on the subgraph $G_h=\langle S_h, L_{G_h} \rangle$ for each subset $S_h$. The subgraph's density, calculated as $\frac{\sum_{l \in L_{\mathrm{G}_h}} w_l}{|L_{{\mathrm{G}_h}|}}$, gauges the semantic cohesiveness of the subset. By maximizing this density, we encourage subsets that are semantically well-connected, while minimizing trivial or overly inclusive selections. Our graph regularization of the extracted subsets $\{S_h\}_{h=1}^H$ is thus defined as:
\begin{equation}
    \mathcal{L}_{con} = - \log \frac{1}{H} \sum_{h=1}^H \frac{\sum_{l \in L_{\mathrm{G}_h}} w_l}{|L_{\mathrm{G}_h}|}.
\end{equation}
This approach not only ensures subsets are semantically dense but also supports the network's focus on enhancing performance without losing critical information through subset disentanglement.

\subsection{Prototypical Set Learning}
Based on extracted subsets, LGDESetNet matches the input with learnable prototypes representing global semantics and accordingly models the output as an aggregation of the global contributions of these prototypes.

\subsubsection{Set-oriented Prototypical Part Layer}
The prototypical part network learns $M$ input-independent prototypical skill sets $\{P_i\}_{i=1}^M$. 
To provide an explicit skill set semantic, we adopt an approach distinct from conventional embedding-based prototype learning.
Specifically, we utilize a discrete multi-hot vector to represent the prototypes, denoted as $\gamma^\text{s} \in \{0, 1\}^{M \times N_{\text{total}}}$. Each row $\gamma^\text{s}_i \in \{0, 1\}^{N_{\text{total}}}$ indicates a prototype’s multi-hot vector and its inner entries show if a skill is contained. 
Considering a global semantic may involve skills with different importance in practice, a weight vector $\gamma^\text{lv}_i \in [0,1]^{N_{\text{total}}}$ is associated with each prototype, indicating each skill's importance within the prototype.
The differentiable training procedure of set-oriented prototypes will be detailed in~\Cref{model:training}.

We assess the alignment between the input set and each prototype (i.e., global semantics) by comparing the similarity of extracted subsets to the prototypes.
Notably, the discrete set representation brings the curse of dimensionality, complicating set similarity estimation. 
For instance, ``C'' and ``C++'' are treated as distinct skills despite their semantic closeness, potentially hindering model convergence and accuracy.
To address this, we project prototype $P_i$ and subset $S_h$ into semantic representation vectors with the same skill embedding table as the input set, denoted as $E^\text{p}_i = \text{pool}\left( \{ \gamma^{\text{s}}_ i \gamma^{\text{lv}}_ i \phi(x_i) | x_i \in U \}  \right)$ and $E^{\text{s}}_{h} = \text{pool}\left( \{ \alpha^{\text{lv}}_ i  \phi(x_i) | x_i \in S_h \}\right)$. 
Considering the distribution shift between the whole set and single-view subsets, we introduce a transformation layer $\mathcal{T}$ to further project the subsets and prototypes into a subset representation space as $Z^\text{p}_i = \mathcal{T}\left(E^\text{p}_i\right)$ and $Z^\text{s}_h = \mathcal{T}\left(E^{\text{s}}_{h}\right)$.
In this way, we better capture semantic-level associations while ensuring the similarity can be measured by aligned representation of prototypes and extracted subsets.

Finally, we calculate the pairwise similarity between a subset $S_h$ and a prototype $P_i$ as
\begin{equation}
    sim(S_h, P_i) = \text{log}\left(\frac{\|Z^\text{s}_h - Z^\text{p}_i\|_2^2 + 1}{\|Z^\text{s}_h - Z^\text{p}_i\|_2^2 + \epsilon}\right).
\end{equation}
For each prototype, we aggregate their similarity to different extracted subsets and apply softmax to obtain a global normalized similarity score.
Moreover, each prototype is associated with a job-context-dependent weight value $\gamma^\text{sal}_i$, reflecting its salary effects in specific job contexts. We constrain these weight values as non-negative values to enhance explainability.
Thus, the output can be modeled as the weighted average of the similarity scores between each prototype and the input skill set:%
\begin{equation}
    y = \sum_{i=1}^M \gamma^{\text{sal}}_i \cdot  \text{softmax}(\sum_{h=1}^H sim(S_h, P_i)).
\end{equation}

\subsubsection{Set-wise External Information Fusion}
In practice, each skill set is enriched with set-wise external information, which is also useful in determining salary outcomes.
For example, job salaries are highly correlated with the work experience and the pricing level of the base city. 
However, directly fusing such information into deep and wide embeddings~\cite{cheng2016wide} could obscure the model's explainability. 
Therefore, we separate the set-wise external information from skill embeddings, acknowledging that the salary effects of global prototypical skill sets may vary across different job scenarios. 
Specifically, we model this prototype salary effect dynamic as a function of the set-wise external features $C = \{c_1, c_2, ..., c_l\}$. 
Drawing inspiration from DeepFM~\cite{guo2017deepfm}, we utilize a factorization machine encoder for low-order feature interactions and an MLP encoder for high-order interactions. By aggregating the embeddings from both encoders, we then employ another MLP to estimate the salary effects of each prototype, ensuring a dynamic and contextual assessment of skill sets' impact on salary predictions:
\begin{equation}
    \begin{split}
        \gamma^\text{sal} & = \text{MLP}\left(\text{FM}(C) + \text{MLP}(c_1|c_2|...|c_l)\right). \\
        \text{FM}(C) & = w_0 + \sum_{i=1}^l w_i c_i + \sum_{i=1}^l \sum_{j=i+1}^l  c_i \odot c_j, 
    \end{split}
\end{equation}
where $\odot$ denotes the element-wise multiplication between features.

\subsubsection{Prototype Learning Regularization}
To enhance explainability, we introduce regularization methods for our prototypical set learning procedure, whose basic ideas are widely adopted in prototype learning for other data structures~\cite{zhang2022protgnn, ming2019interpretable}.
First, since prototypes are desired to be representative exemplars for the subsets, we introduce a representation regularization term to encourage each subset to be close to at least one prototype and encourage each prototype to be close to at least one extracted subset.
\begin{equation}
    \mathcal{L}_{rep} = \frac{1}{H} \sum\limits_{h=1}^H \min_{i \in [1, M]} \|Z^\text{s}_h - Z^\text{p}_i\|_2^2 
    + \frac{1}{M} \sum_{i=1}^M \min_{h \in [1, H]} \|Z^\text{s}_h - Z^\text{p}_i\|_2^2.
\end{equation}

Moreover, to learn an extensive prototypical embedding space, we introduce a diversity regularization term to scatter prototypes. 
\begin{equation}
    \mathcal{L}_{div} = \frac{1}{M} \sum_{i=1}^M \sum_{j=i+1}^M \max(0, sim(P_i, P_j) - \theta_{min}),
\end{equation}
where $\theta_{min}$ is a pre-defined threshold that verdicts whether a prototype pair is close or not. We set $\theta_{min}$ to $1.0$ in our experiments.

To summarize, the overall loss function is formulated as:
\begin{equation}
    \mathcal{L} = \mathcal{L}_{pred} + \lambda_{con} \mathcal{L}_{con} + \lambda_{rep} \mathcal{L}_{rep} + \lambda_{div} \mathcal{L}_{div}.
\end{equation}
where $\mathcal{L}_{pred}$ is an appropriate loss function for outcome prediction and $\lambda_{*}$ are hyperparameters for controlling the impacts of regularization terms on model explainability.

\begin{algorithm}[t]
    \caption{Training Procedure of LGDESetNet}
    \label{alg::train}
    \begin{algorithmic}[1]
      \Require
        The dataset $\mathcal{D} = \{(X_i, C_i, Y_i)\}_{i=1}^{N_{d}}$, Total number of epochs $N_t$, Warm-up epoch $N_w$, Prototype projection period $\tau$, and Prototype refinement epoch $N_f$;
      \Ensure
        Parameters of the trained model $\Phi$ and set-formed prototypical concepts $\{P_k\}_{k=1}^M$;
      \State Initialize the model parameters $\Phi$;
      \State Extract the frequent skill sets $\mathcal{D}_{S}$ from $\mathcal{D}$;
      \For{Epoch $e$ = 1, 2,..., $N_t$}
         \State Fit $\Phi$ on $\mathcal{D}$ by minimizing $\mathcal{L}$;
         \If{$e > N_w$ \textbf{and} $e \% \tau == 0$}
            \State Infer all the extracted subsets $\mathcal{S}$ using $\Phi$ on $\mathcal{D}_S$;
            \For{$k = 1,2,...,M$}
                 \State $\gamma^{\text{s}*}_k, \gamma^{\text{lv}*}_k \leftarrow \argmax_{S \in \mathcal{S}} sim(S, P_k)$
                 \State $E^{p*}_k \leftarrow \text{pool}\left( \{ \gamma^{\text{s}*}_k \gamma^{\text{lv}*}_ k \phi(x_i) | x_i \in U \} \right)$
            \EndFor
         \EndIf
      \EndFor
      \For{$k = 1,2,...,M$}
          \State Initialize $\triangle E^p_k = \mathcal{T}\left( \text{pool}\left( \{ \left(\gamma^{\text{s}*}_k \gamma^{\text{lv}*}_ k + \triangle P_k \right)\phi(x_i) | x_i \in  U\}  \right)\right)$;
    \State Freeze $E^{p*}_k$ and replace $E^p_k$ with $E^{p*}_k + \triangle E^p_k$;
    \EndFor

    \For{$l = 1,2,...,N_f$}
        \State Fit $\Phi$ on $\mathcal{D}$ by minimizing $\mathcal{L} + \lambda_p \sum_{k=1}^M \|\triangle P_k\|_1$;
    \EndFor
    \State $\{P_k\}_{k=1}^M \leftarrow \{\gamma^{\text{s}*}_k + \triangle P_k, \gamma^{\text{lv}*}_ k\}_{k=1}^M$;\\
    \Return $\Phi$ and $\{P_k\}_{k=1}^M$;
    \end{algorithmic}
\end{algorithm}

\begin{figure}
    \centering
    \includegraphics[width=\linewidth]{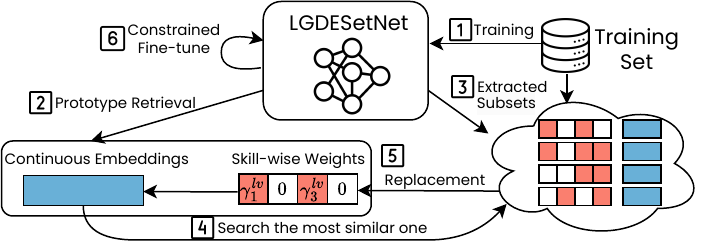}
    \caption{Training procedure of LGDESetNet.}
    \label{fig:algorithm}
\end{figure}

\subsubsection{Embedding-Projection Training Procedure}
\label{model:training}
Training discrete prototypes directly using gradient descent on randomly initialized values is challenging due to the non-contiguous nature of prototype representation.
Direct L1 regularization could incur stringent constraints and adversely affect the model performance due to the skill sparsity.
To address these limitations, we introduce an embedding-projection iterative learning algorithm for discrete prototypes. \Cref{alg::train} details the training procedure with \Cref{fig:algorithm} graphically illustrating the workflow. The algorithm proceeds in five steps:
(1) We start by optimizing the target embeddings $E^\text{p}$ for prototypes, updating them without back-propagating gradients to the discrete representations. Since $E^\text{p}$ is contiguous, gradient descent can effectively optimize it. This stage includes $N_w$ warm-up epochs to guide $E^\text{p}$ towards the appropriate semantic space. 
(2) Once retrieving the continuous prototype embeddings from the trained model, (3) we identify the closest matching subsets from frequent skill sets in the training data, (4) and then update the discrete prototypes and their corresponding embeddings $E^{\text{p}*}$. (5) Upon finalizing, we enhance the prototype's adaptability by adding a minimal bias term to its discrete representation. Specifically, we freeze $E^{\text{p}*}$ and introduce a learnable bias term $\triangle E^\text{p}$, whose weight vector $\triangle P$ is initialized as zeros and L1-regularized. This fine-tuning step preserves explainability while allowing for nuanced expression of prototypical semantic uncertainty. To demonstrate the effectiveness of our architecture design for set-based modeling, we provide a theoretical proof of network permutation invariance in \ref{appendix:proof}.

\begin{table}[t]
    \centering
    \caption{Detailed Dataset Features}
    \label{tab:dataset-feature}
    \resizebox{\linewidth}{!}{
    \begin{tabular}{l|cccc}
        \toprule
        Features & IT & Designer & High-tech & Financial \\ 
        \midrule
        \# Job Postings  & 212,676 & 18,588 & 128,346 & 45,394 \\
        \# Skills        & 1374    & 138    & 203     & 385    \\
        \midrule
        Skill Level      & \ding{51} & \ding{51} &         &         \\
        City             & \ding{51} & \ding{51} & \ding{51} & \ding{51} \\
        Company Name     & \ding{51} & \ding{51} &         &         \\
        Company Size     & \ding{51} & \ding{51} & \ding{51} & \ding{51} \\
        Company Stage    & \ding{51} & \ding{51} & \ding{51} &         \\
        Industry         & \ding{51} & \ding{51} & \ding{51} &         \\
        Work Experience  & \ding{51} & \ding{51} &         &         \\
        Job Temptation   & \ding{51} & \ding{51} &         &         \\
        Time             & \ding{51} & \ding{51} &         &         \\
        Employer Type    &          &          &         & \ding{51} \\
        \bottomrule
    \end{tabular}
    }
\end{table}

\subsection{Complexity Analysis}
\label{sec:complexity}
We present a theoretical analysis of LGDESetNet's computational complexity to demonstrate its feasibility for real-world deployment. The model's time complexity is dominated by two primary components: the disentangled discrete subset selection module with complexity $O(n^2 \times d \times H)$ for input sets of size $n$ with embedding dimension $d$ across $H$ views, and the prototypical set learning component requiring $O(H \times M \times d)$ operations for comparing extracted subsets with $M$ prototypes. Combined, the overall inference time complexity is $O(n^2 \times d \times H + H \times M \times d)$, which remains manageable for typical skill set sizes. 
The quadratic dependency on input set size $n$ aligns with attention-based methods in set processing, including Set-Tree~\cite{hirsch2021trees} and SESM~\cite{geng2020does}, and remains computationally feasible given the typically modest number of skills in real-world job postings.
Importantly, our model's linear scaling with respect to views ($H$) and prototypes ($M$) enables precise control over computational requirements and model capacity, allowing for effective deployment across various computational environments. For example, reducing $H$ decreases skill set selection costs, while limiting $M$ reduces prototype comparison operations. The skill/set-wise information fusion modules can be simplified to constant-time operations when external information is unnecessary, further optimizing computational requirements. These properties ensure LGDESetNet remains computationally efficient while preserving explainability across diverse applications.

\section{Experiments}

\begin{table*}[t]
    \centering
    \caption{{Overall performance evaluation. The listed methods include two categories: salary prediction models and self-explainable models. The best results are highlighted in \textbf{bold}.}}
    \resizebox{\linewidth}{!}{%
    \begin{tabular}{l|cccccccc}
    \toprule[0.5pt]
    \multirow{4}{*}{Methods}
    & \multicolumn{4}{c}{Skill-Salary Job Postings~\cite{sun2021market}} & \multicolumn{4}{c}{Job Salary Benchmarking~\cite{meng2022fine}} \\
    \cmidrule(r){2-5} \cmidrule(r){6-9}
    & \multicolumn{2}{c}{IT} & \multicolumn{2}{c}{Designer} & \multicolumn{2}{c}{High-tech} & \multicolumn{2}{c}{Finance} \\ \cmidrule(r){2-9}
    & RMSE ($\downarrow$) & MAE ($\downarrow$) & RMSE ($\downarrow$) & MAE ($\downarrow$) & RMSE ($\downarrow$) & MAE ($\downarrow$) & RMSE ($\downarrow$) & MAE ($\downarrow$) \\
    \midrule
    $\text{HSBMF}$ & $\text{5.291}_{\pm \text{0.017}}$ & $\text{3.939}_{\pm \text{0.015}}$ & $\text{4.612}_{\pm \text{0.025}}$ & $\text{3.371}_{\pm \text{0.021}}$ & $\text{3.844}_{\pm \text{0.016}}$ & $\text{2.872}_{\pm \text{0.013}}$ & $\text{4.299}_{\pm \text{0.025}}$ & $\text{3.150}_{\pm \text{0.019}}$ \\
    SSCN & $\text{4.762}_{\pm \text{0.063}}$ & $\text{3.484}_{\pm \text{0.052}}$ & $\text{3.841}_{\pm \text{0.143}}$ & $\text{2.766}_{\pm \text{0.102}}$ & $\text{3.663}_{\pm \text{0.162}}$ & $\text{2.740}_{\pm \text{0.106}}$ & $\text{4.037}_{\pm \text{0.134}}$ & $\text{2.948}_{\pm \text{0.096}}$ \\
    NDP-JSB & $\text{5.342}_{\pm \text{0.021}}$ & $\text{4.073}_{\pm \text{0.021}}$ & $\text{4.623}_{\pm \text{0.044}}$ & $\text{3.262}_{\pm \text{0.023}}$ & $\text{3.875}_{\pm \text{0.086}}$ & $\text{2.827}_{\pm \text{0.067}}$ & $\text{4.295}_{\pm \text{0.075}}$ & $\text{3.146}_{\pm \text{0.071}}$ \\ \midrule
    Set-Tree & $\text{5.552}_{\pm \text{0.065}}$ & $\text{4.367}_{\pm \text{0.083}}$ & $\text{4.339}_{\pm \text{0.149}}$ & $\text{3.269}_{\pm \text{0.061}}$ & $\text{3.953}_{\pm \text{0.092}}$ & $\text{2.998}_{\pm \text{0.065}}$ & $\text{4.406}_{\pm \text{0.079}}$ & $\text{3.206}_{\pm \text{0.057}}$ \\ %
    SESM & $\text{4.489}_{\pm \text{0.034}}$ & $\text{3.532}_{\pm \text{0.025}}$ & $\text{3.812}_{\pm \text{0.123}}$ & $\text{2.767}_{\pm \text{0.064}}$ & $\text{3.623}_{\pm \text{0.029}}$ & $\text{2.817}_{\pm \text{0.017}}$ & $\text{3.967}_{\pm \text{0.031}}$ & $\text{2.940}_{\pm \text{0.011}}$ \\
    ProtoPNet & $\text{4.503}_{\pm \text{0.029}}$ & $\text{3.423}_{\pm \text{0.037}}$ & $\text{3.794}_{\pm \text{0.013}}$ & $\text{2.772}_{\pm \text{0.076}}$ & $\text{3.602}_{\pm \text{0.110}}$ & $\text{2.786}_{\pm \text{0.025}}$ & $\text{4.116}_{\pm \text{0.144}}$ & $\text{3.013}_{\pm \text{0.063}}$ \\
    TesNet & $\text{4.555}_{\pm \text{0.102}}$ & $\text{3.698}_{\pm \text{0.037}}$ & $\text{3.804}_{\pm \text{0.063}}$ & $\text{2.902}_{\pm \text{0.012}}$ & $\text{3.692}_{\pm \text{0.096}}$ & $\text{2.806}_{\pm \text{0.081}}$ & $\text{4.241}_{\pm \text{0.108}}$ & $\text{3.094}_{\pm \text{0.071}}$ \\ 
    ProtoConcepts & {$\text{4.468}_{\pm \text{0.062}}$} & $\text{3.412}_{\pm \text{0.025}}$ & $\text{3.792}_{\pm \text{0.083}}$ & $\text{2.774}_{\pm \text{0.052}}$ & $\text{3.504}_{\pm \text{0.061}}$ & $\text{2.776}_{\pm \text{0.026}}$ & $\text{4.023}_{\pm \text{0.034}}$ & {$\text{2.961}_{\pm \text{0.048}}$} \\ \midrule
    LGDESetNet & \textbf{${\text{4.162}_{\pm \text{0.012}}}$} & \textbf{${\text{3.141}_{\pm \text{0.041}}}$} & \textbf{${\text{3.473}_{\pm \text{0.058}}}$} & \textbf{${\text{2.559}_{\pm \text{0.062}}}$} & \textbf{${\text{3.327}_{\pm \text{0.038}}}$} & \textbf{${\text{2.434}_{\pm \text{0.037}}}$} & \textbf{${\text{3.775}_{\pm \text{0.046}}}$} & \textbf{${\text{2.768}_{\pm \text{0.031}}}$} \\
    \bottomrule[0.5pt]
    \end{tabular}}
    \label{table:lower_performance}
    \vspace*{-2mm}
\end{table*}

\subsection{Experimental Setup}

\subsubsection{Datasets}
The data descriptions are provided in \Cref{sec:data-description}. We use four real-world, publicly available job posting datasets with distinct job and skill distributions: (1) Skill-Salary Job Postings datasets~\cite{JobData}, including IT and designer topics. (2) Job Salary Benchmarking datasets~\cite{JobData2}, covering high-tech and financial industries. We provide overview of detailed dataset features in \Cref{tab:dataset-feature}, and provide more descriptive statistics in~\ref{appendix:dataset}.

\subsubsection{Baseline Methods}
We compare our LGDESetNet with the following state-of-the-art salary prediction models:
\begin{itemize}[leftmargin=0pt]
    \item \textbf{HSBMF}~\cite{meng2018intelligent} is a salary benchmarking model utilizing a Bayesian approach to capture the hierarchical structure of massive job postings.
    \item \textbf{SSCN}~\cite{sun2021market} is an enhanced neural network with cooperative structure for separating job skills and measuring their values based on job postings.
    \item \textbf{NDP-JSB}~\cite{meng2022fine} develops a non-parametric Dirichlet process-based latent factor model to jointly model the latent representations of job contexts.
\end{itemize}

We also compare LGDESetNet with the following self-explainable models for set modeling. 
\begin{itemize}[leftmargin=0pt]
    \item \textbf{Set-Tree}~\cite{hirsch2021trees} is a tree-based general framework for processing sets and quantifies the relative importance of each item according to its frequency.
    \item \textbf{SESM}~\cite{geng2020does} is an instance-wise selection method that discovers a fixed number of sub-inputs to explain its own prediction.
    \item \textbf{ProtoPNet}~\cite{chen2019looks} is the pioneering work in prototype-based models, which dissects the input images into patches for fine-grained interpretation.
    \item \textbf{TesNet}~\cite{wang2021interpretable} is another prototype-based image recognition deep network that introduces a transparent embedding space for disentangled global prototype construction.
    \item \textbf{ProtoConcepts}~\cite{ma2024looks} modifies prototype geometry to enable visualizations of prototypes from multiple training images.
\end{itemize}
Specifically, directly applying SESM and prototype-based models is not viable due to the discrete and unordered nature of set data. To address this, we integrate the DeepSets framework and use LGDESetNet's external information fusion method for fair comparison.

\subsubsection{Implementation Details}
We evaluate the performance based on RMSE/MAE for salary prediction tasks. 
All results are reported based on 5 rounds of experiments.
The datasets are partitioned into training/validation/testing sets by a ratio of 6:2:2.
For fair comparison, we select the hyperparameter configurations based on the best performance on the validation set.
For all experiments, prototype number $M$ is set to 64 and head number $H$ is set to 4. The loss function hyperparameters $\lambda_{rep}$, $\lambda_{con}$, and $\lambda_{div}$ are all set to 0.1. We train our model with ADAM optimizer~\cite{kingma2014adam}.
The transformation layer $\mathcal{T}$ is implemented using a 2-layer MLP layer with 256 hidden units. 
Experiments were implemented in Pytorch and run on a server with 8 Nvidia GeForce RTX 3090 GPUs.~\footnote{\ Code is available at \url{https://anonymous.4open.science/r/FCS-Submission-LGDESetNet}.}

\begin{figure*}[t]
    \centering
    \includegraphics[trim=2mm 4mm 1mm 2mm, clip=true, width=0.23\linewidth]{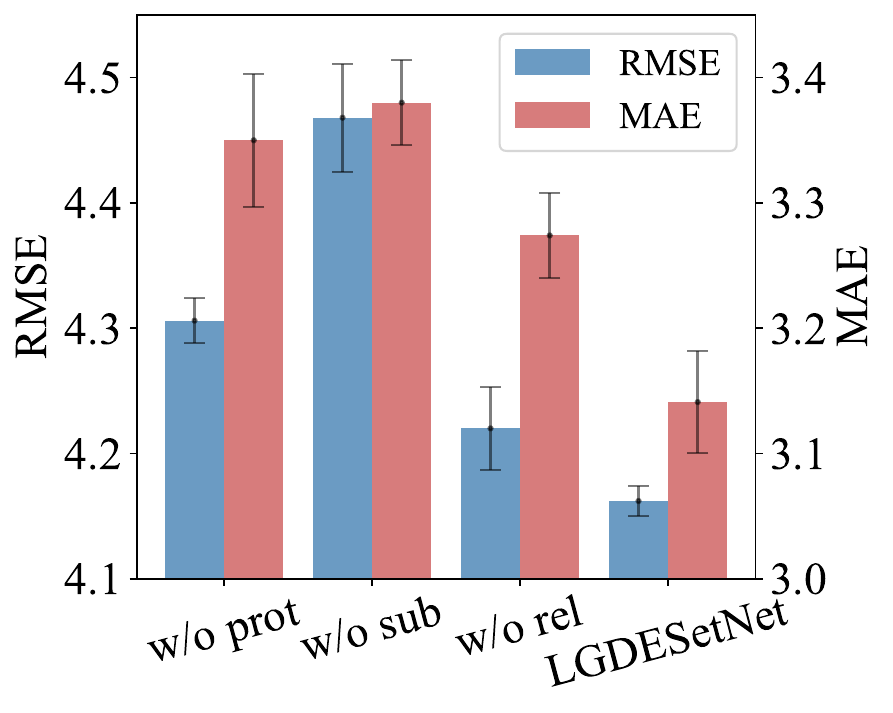}
    \includegraphics[trim=2mm 4mm 1mm 2mm, clip=true, width=0.23\linewidth]{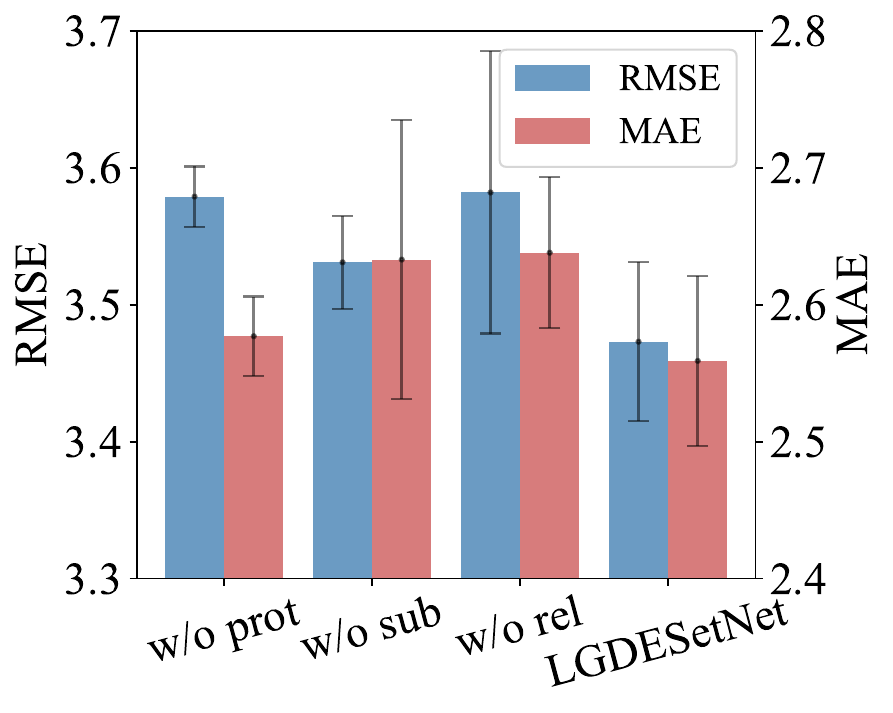}
    \includegraphics[trim=2mm 4mm 1mm 2mm, clip=true, width=0.23\linewidth]{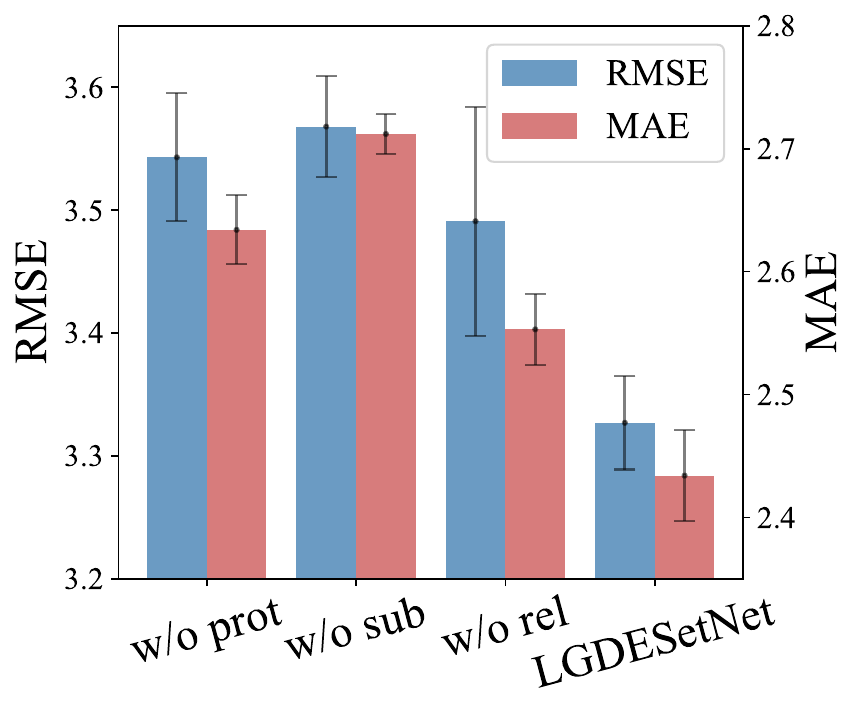}
    \includegraphics[trim=2mm 4mm 1mm 2mm, clip=true, width=0.23\linewidth]{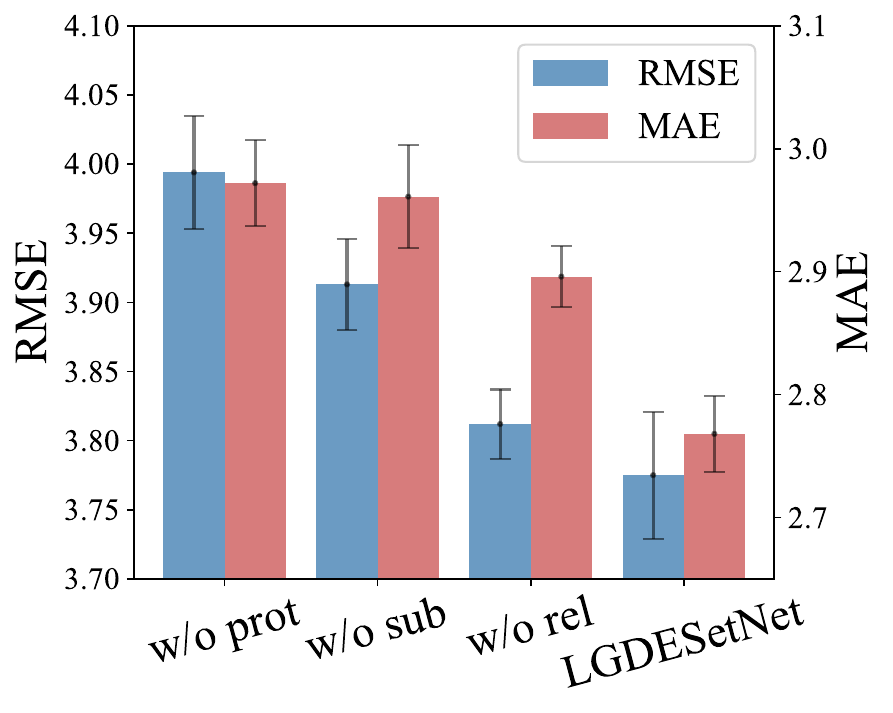}
    
    \caption{Ablation studies across four datasets.}
    \label{fig:ablation}
\end{figure*}

\subsection{Overall Performance Evaluation}
The overall prediction performance of LGDESetNet and all baselines on four datasets is reported in~\Cref{table:lower_performance}. 
We can observe that:
(1) LGDESetNet consistently outperforms existing salary prediction baselines across all datasets, demonstrating that our comprehensive framework more effectively captures the complex factors influencing salary. This validates our hypothesis that a holistic skill set-based reasoning process can effectively model intrinsic salary-influential patterns.
(2) Compared to self-explainable baselines, LGDESetNet achieves an average improvement of 10\%. This substantial gain stems from our model's ability to disentangle and prototype relationships among set entities while maintaining job context awareness, enabling more accurate identification of salary-influential skill patterns.
Unlike existing self-interpretable models that often trade accuracy for explainability, LGDESetNet matches or even exceeds the performance of complex black-box models, as shown in Supplementary~\Cref{table:lower_performance_additional}. This demonstrates its ability to bridge the typical performance gap between interpretable and black-box approaches, offering a more transparent yet accurate solution for set modeling.
(3) The notable underperformance of Set-Tree compared to other methods highlights the importance of our approach. Set-Tree's limitations in capturing job context-dependent nuances in set data and modeling combinatorial interactions among set elements underscore the challenges in explainable skill-based set modeling.

\subsection{Ablation Study}
We introduce three variants of our LGDESetNet by disabling some parts of the network to evaluate the effectiveness of each component: (1) \textbf{w/o prot}, which disables the prototypical set learning module; (2) \textbf{w/o sub}, which disables the disentangled discrete subset selection module; and (3) \textbf{w/o rel}, which utilizes a conventional prototype replacement strategy~\cite{ming2019interpretable} instead of our embedding-projection training algorithm. 
The results are reported in~\Cref{fig:ablation}.
It can be observed that removing any modules from LGDESetNet reduces its performance, as the disentangled subset selection and prototypical learning modules are essential for capturing the full utility of set-valued reasoning. 
Particularly, the removal of the disentangled subset selection module leads to the most significant performance drop, underscoring the importance of skill semantic disentanglement. 
LGDESetNet outperforms these two components, which indicates that it not only raises the model explainability but also can increase the fitting ability by considering semantic interactions between subsets.
Furthermore, LGDESetNet shows a 2\%-5\% performance gain over its variant with a classical projection strategy~\cite{ming2019interpretable}, validating our embedding-projection training's effectiveness in advancing prototype learning.

\begin{figure}[t]
    \centering
    \includegraphics[trim=2mm 4mm 1mm 2mm, clip=true, width=0.45\linewidth]{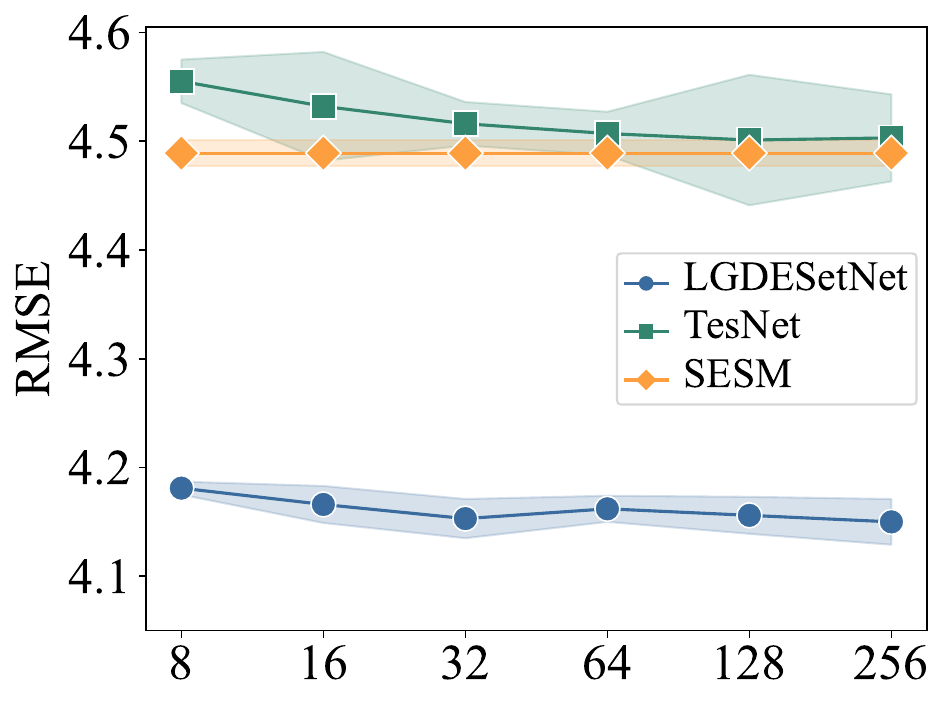}
    \hspace{0.05\linewidth}
    \includegraphics[trim=2mm 4mm 1mm 2mm, clip=true, width=0.45\linewidth]{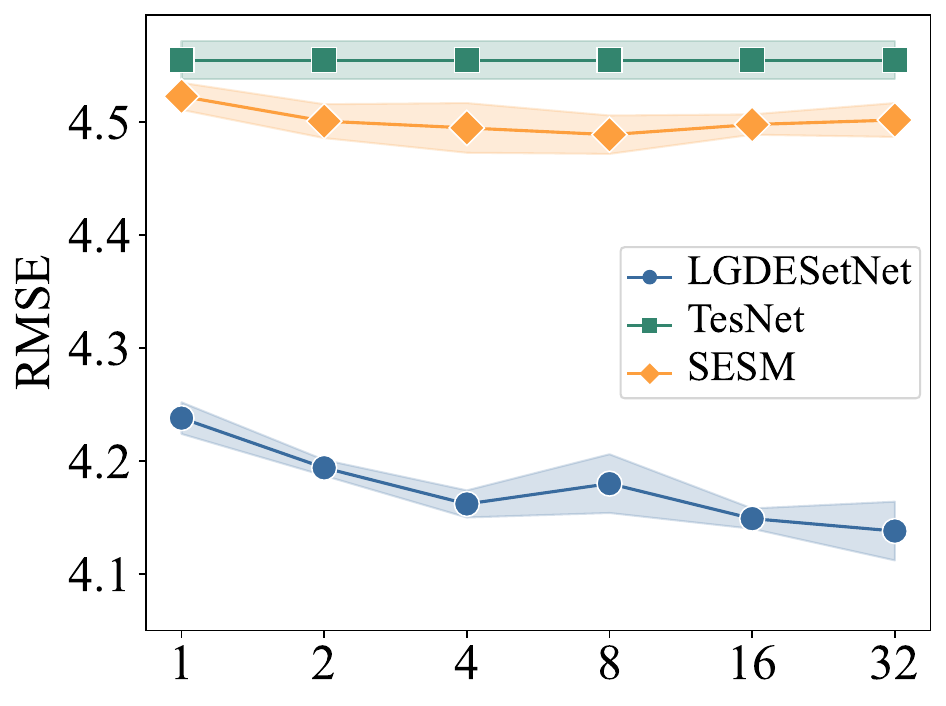}
    \caption{Parameter sensitivity of IT job postings. Left: Prototype number $M$. Right: View number $H$.}
    \label{fig:Params}
\end{figure}

\subsection{Parameters Sensitivity}
We further conduct the hyper-parameter analysis of the prototype number $M$ and the view number $H$. Specifically, we evaluate the performance of LGDESetNet with varying $M$ and $H$ on the IT dataset, compared with instance-wise selection model SESM~\cite{geng2020does} and prototype-based model TesNet~\cite{wang2021interpretable}.
The results are shown in~\Cref{fig:Params}.
With the number of prototypes $M$ increases, the performance gets minorly better and stabilizes within a certain range. 
This pattern suggests that our model effectively captures the intrinsic dimensionality of the skill embedding space with a relatively small number of prototypes. Importantly, LGDESetNet maintains consistent performance advantages over baseline methods across all prototype settings, highlighting its inherent robustness to this hyperparameter. The stability plateau indicates that excessive prototypes may not necessarily improve the model's performance, as they could introduce noise and redundancy.
By default, we choose $M=64$ in order to get more diverse and representative skill prototypes. 

The view number $H$ (Figure~\ref{fig:Params}) exhibits more pronounced sensitivity, with a steep performance improvement from $H=1$ to $H=4$, followed by marginal gains thereafter. This pattern reveals that multiple views are essential for capturing the multi-faceted nature of skill representations, but only up to a certain threshold. The rapid early improvement suggests that different views effectively capture complementary aspects of skill semantics. However, the diminishing returns beyond $H=4$ indicate a saturation point where additional views contribute minimal new information. Moreover, excessive view fragmentation can partition the semantic space too finely, potentially compromising both computational efficiency and interpretability. Based on this analysis, we adopt $H=4$ for our experiments as the optimal balance between model performance and semantic coherence.

\begin{table}[t]
    \centering
    \caption{Results on subset cohesion score.}
    \begin{adjustbox}{max width=\linewidth}
    \begin{tabular}{l|cccc}
    \toprule[0.5pt] \specialrule{0em}{1.5pt}{1.5pt}
    Methods & IT   & Designer    & High-tech & Finance   \\ \midrule \specialrule{0em}{1.5pt}{1.5pt}
    SESM    & $\text{0.085}_{\pm \text{0.012}}$ & $\text{0.035}_{\pm \text{0.018}}$  & $\text{0.095}_{\pm \text{0.025}}$   &         $\text{0.074}_{\pm \text{0.023}}$            \\
    LGDESetNet & $\text{0.126}_{\pm \text{0.034}}$ & $\text{0.113}_{\pm \text{0.021}}$ &  $\text{0.186}_{\pm \text{0.011}}$         &      $\text{0.127}_{\pm \text{0.034}}$ \\ 
    \bottomrule[0.5pt]
    \end{tabular}
    \end{adjustbox}
    \label{tab:label_scs}
\end{table}

\subsection{Quantitative Explanation Analysis}
We evaluate LGDESetNet's ability to identify semantically cohesive skill subsets using the Subset Cohesion Score (SCS). SCS quantifies the semantic tightness of a subset $S$, calculated as $\text{SCS}(S) = \frac{2}{|S|(|S|-1)} \sum_{x_i,x_j \in S} \mathcal{F}(x_i, x_j)$, where $\mathcal{F}$ indicates skill co-occurrence frequency across the dataset. As shown in Table~\ref{tab:label_scs}, LGDESetNet substantially outperforms SESM across all four domains. These results demonstrate that our approach more effectively captures domain-specific skill relationships, with the greatest advantages observed in highly specialized fields where skill coherence is crucial. The consistent performance improvement across diverse domains indicates that LGDESetNet's co-occurrence graph density regularization and global prototype-based selection mechanism successfully identify meaningful skill compositions rather than merely selecting individual skills. 

\begin{figure*}[t]
    \centering

    \begin{minipage}[t]{0.3\linewidth}
        \centering
        \includegraphics[width=\linewidth]{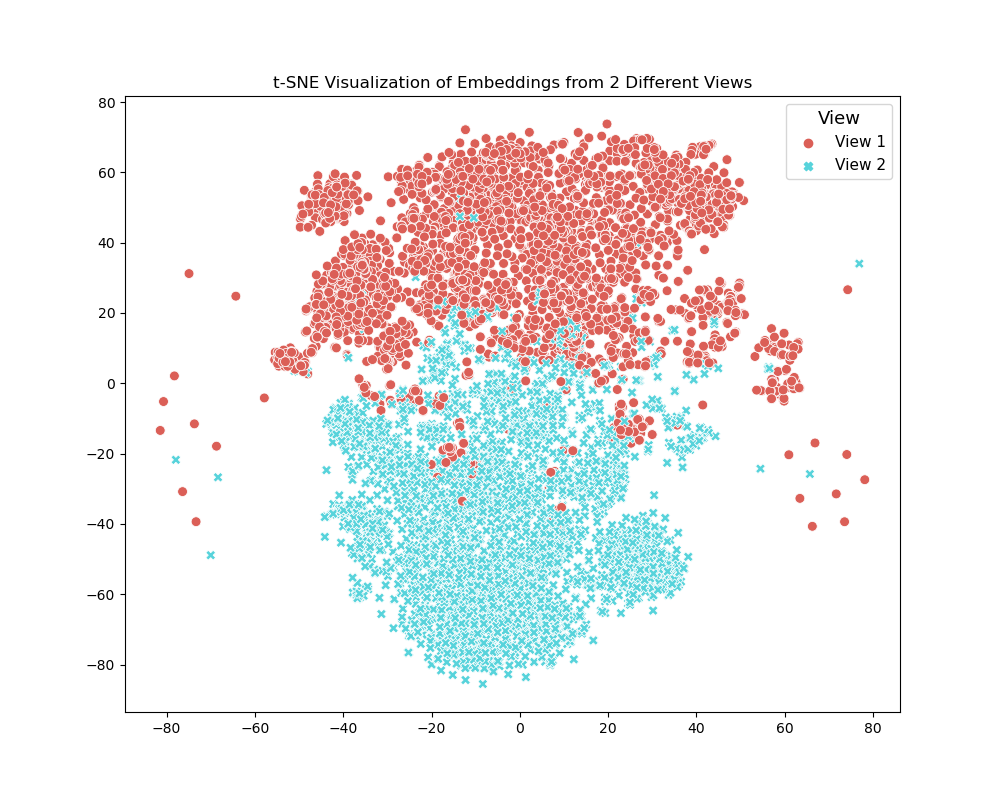}
        \vspace{1mm}
        \small\textbf{(a)} $H = 2$
    \end{minipage}
    \hspace{0.035\linewidth}
    \begin{minipage}[t]{0.3\linewidth}
        \centering
        \includegraphics[width=\linewidth]{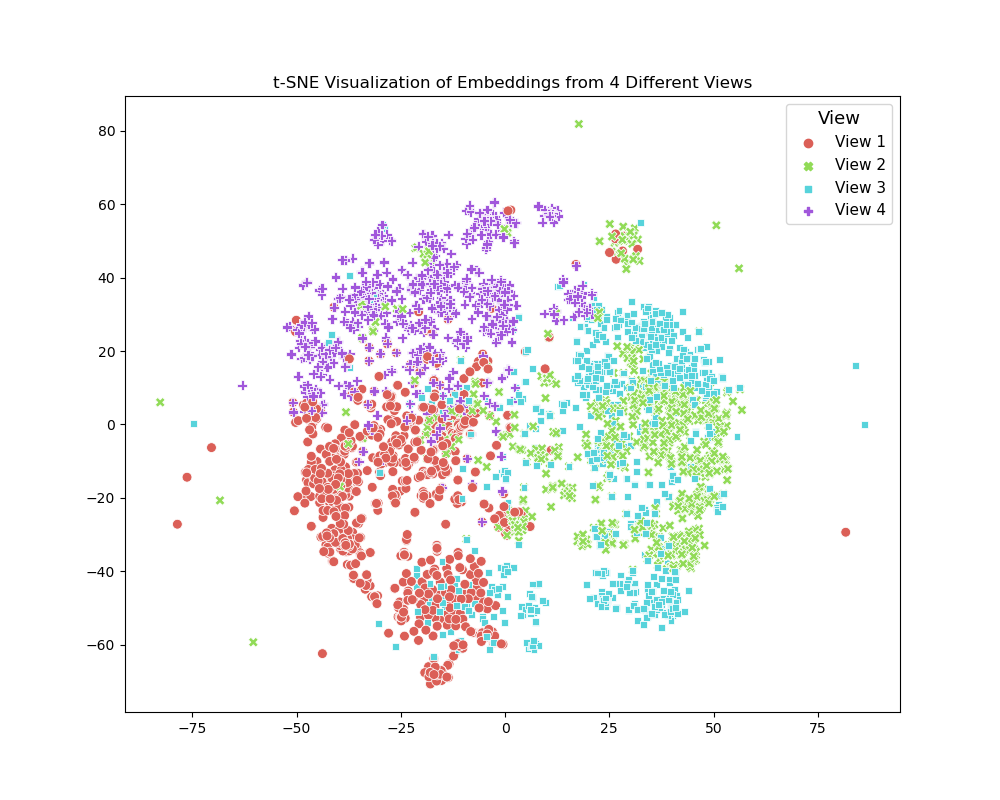}
        \vspace{1mm}
        \small\textbf{(b)} $H = 4$
    \end{minipage}
    \hspace{0.035\linewidth}
    \begin{minipage}[t]{0.3\linewidth}
        \centering
        \includegraphics[width=\linewidth]{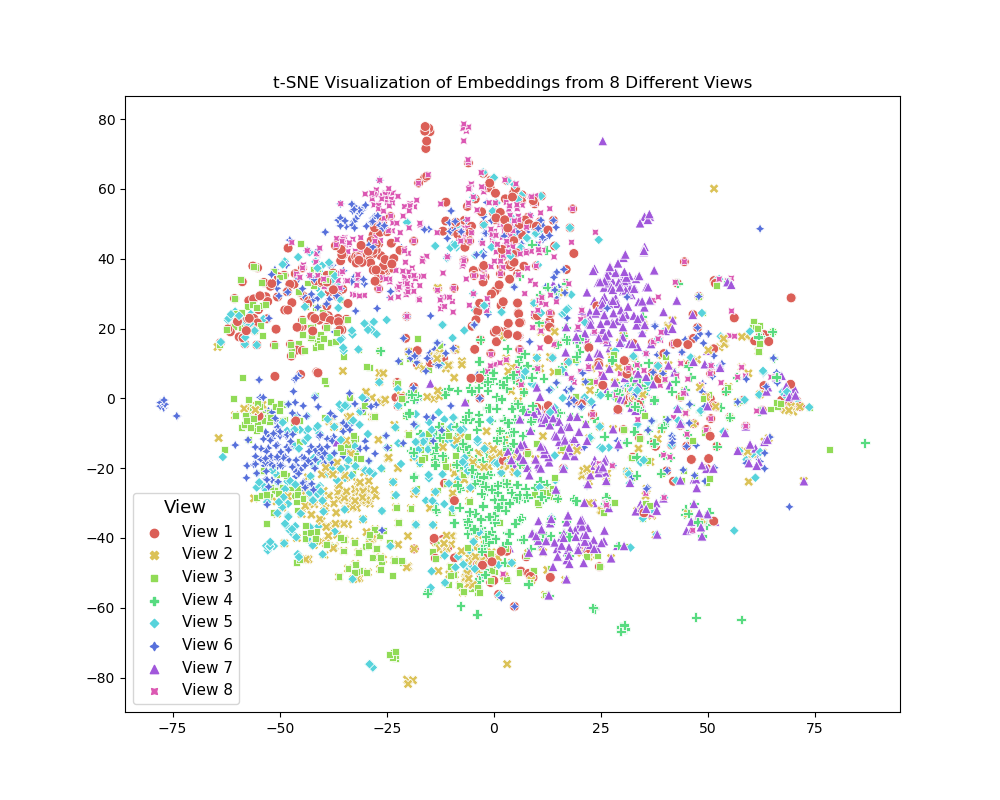}
        \vspace{1mm}
        \small\textbf{(c)} $H = 8$
    \end{minipage}

    \caption{t-SNE visualizations of skill set embeddings under different numbers of views $H$.}
    \label{fig:tsne-comparison}
\end{figure*}

\subsection{View Visualization}
\Cref{fig:tsne-comparison} shows the t-SNE visualization of extracted skill subsets with different view numbers $H$. With $H$ = 2 or 4, we can observe that the skill embeddings are more compact and well-separated. Some overlap between views is reasonable, as certain skills share common traits, but the overall semantic meaning remains distinct. This indicates that LGDESetNet effectively captures the intrinsic skill relationships. When expanded to $H$ = 8, the skill embeddings become more dispersed, with less clear boundaries between different skill clusters. This suggests that excessive views may introduce redundancy, potentially compromising the model's interpretability. This is also consistent with our previous experimental findings (shown in \Cref{fig:Params}) that $H$ = 4 is the optimal balance.

\begin{figure}[t]
    \centering

    \includegraphics[width=0.95\linewidth]{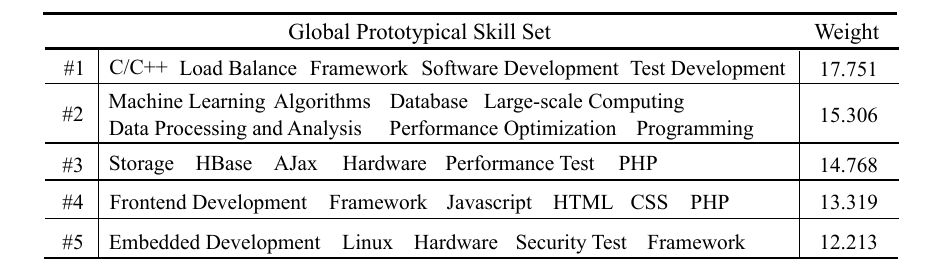}
    \caption*{\textbf{(a)} Top-5 Salary-Influential Prototypical Skill Sets}

    \vspace{2mm}

    \begin{minipage}[b]{0.48\linewidth}
        \centering
        \includegraphics[width=\linewidth]{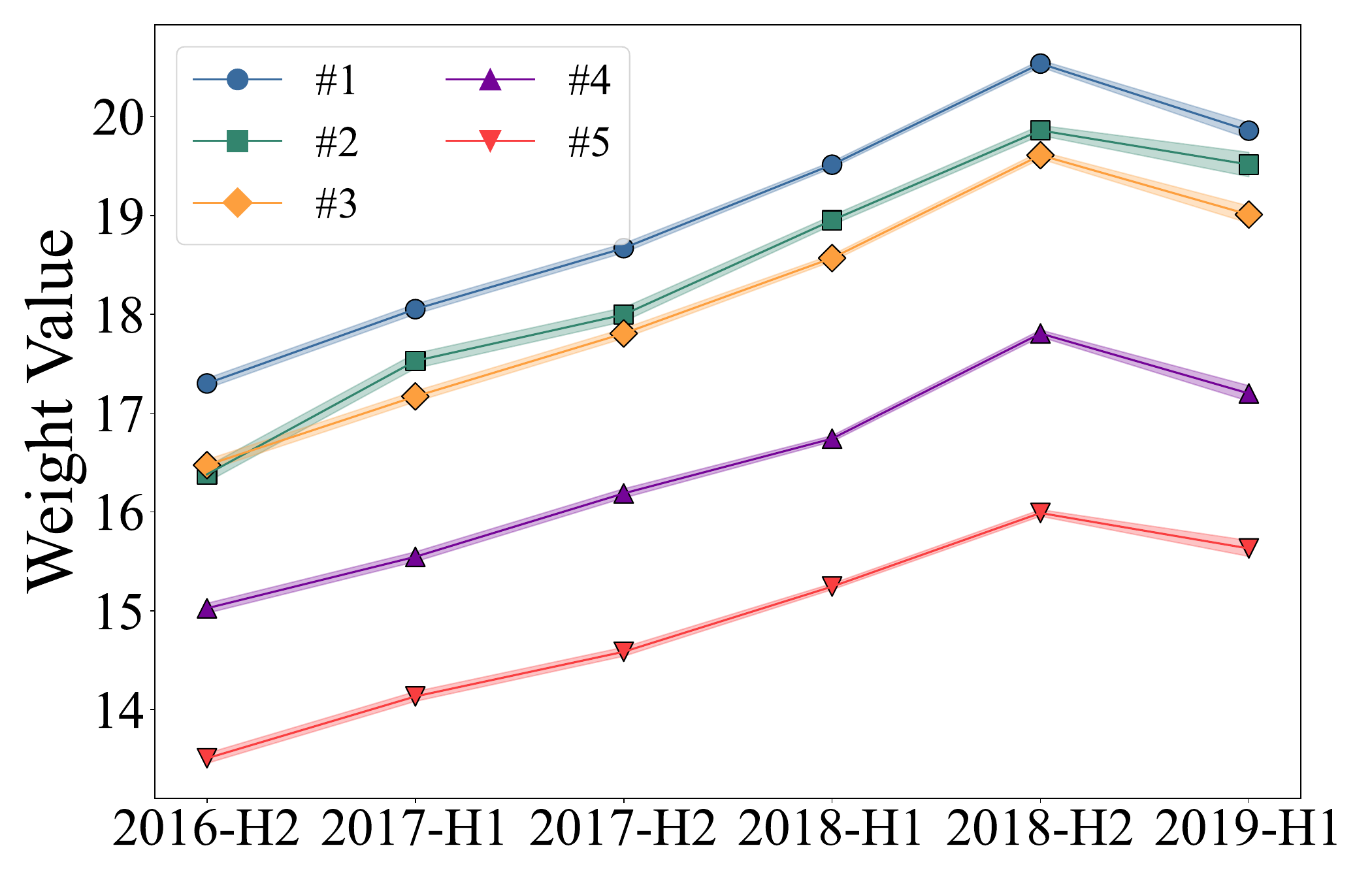}
        \caption*{\textbf{(b)} Time}
    \end{minipage}
    \hfill
    \begin{minipage}[b]{0.48\linewidth}
        \centering
        \includegraphics[width=\linewidth]{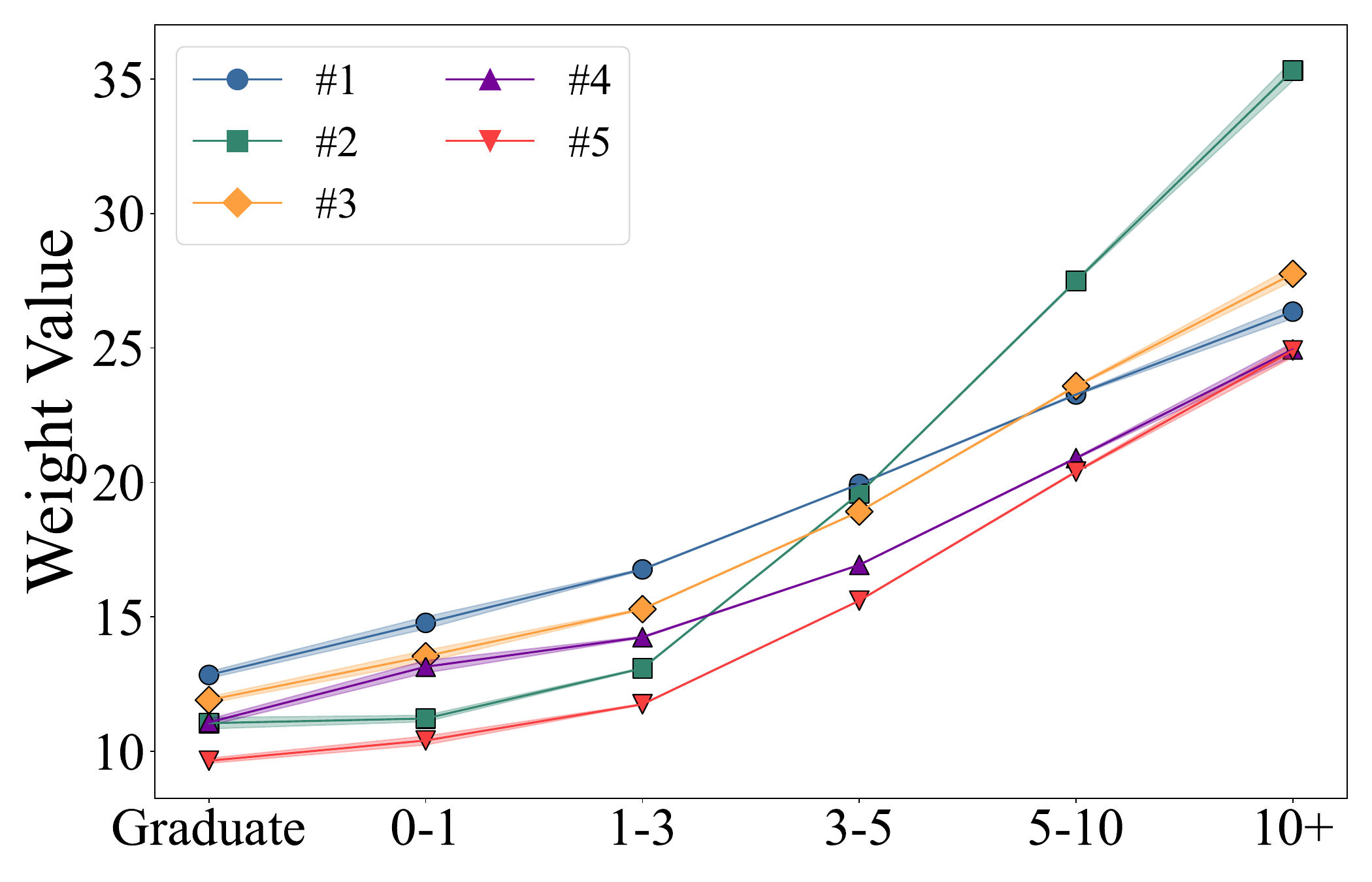}
        \caption*{\textbf{(c)} Work Experience}
    \end{minipage}

    \vspace{2mm}

    \includegraphics[width=0.95\linewidth]{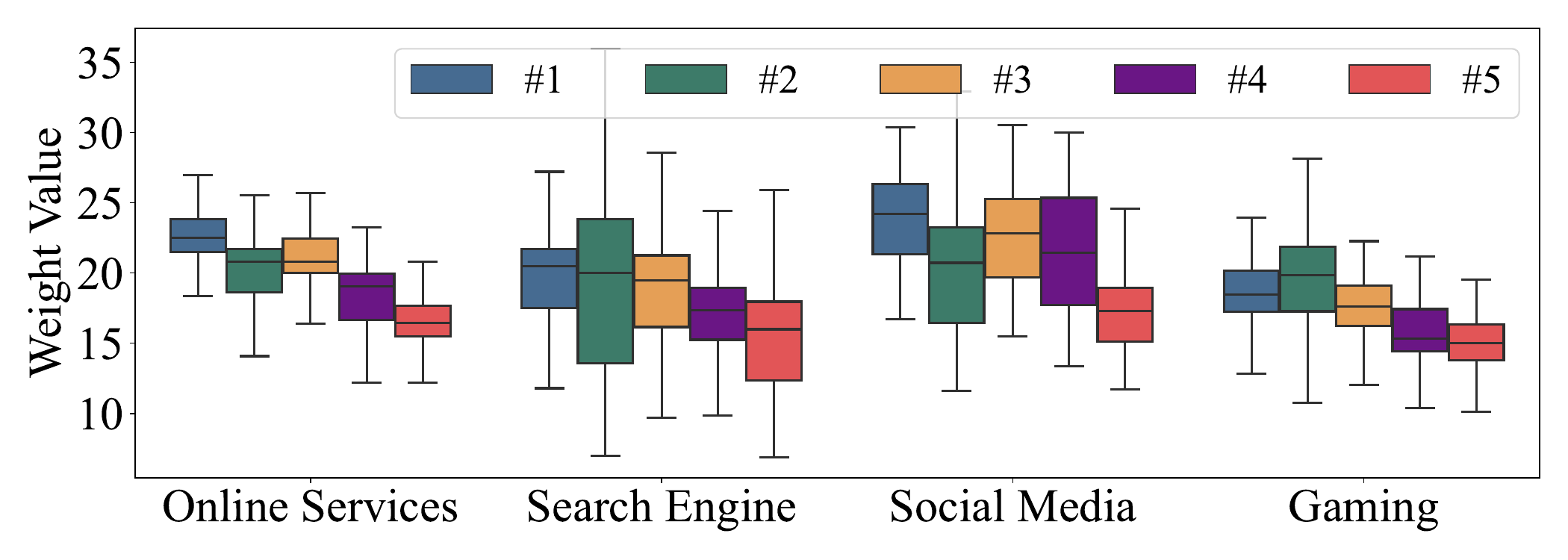}
    \caption*{\textbf{(d)} Service Business}

    \caption{Global salary-influential prototypical skill sets concerning different job contexts.}
    \label{fig:global_vis}
\end{figure}

\begin{figure*}[t]
    \begin{center}
        \includegraphics[width=\linewidth]{}
        \caption{Reasoning processes of LGDESetNet on the IT-related job posting. LGDESetNet identifies semantically meaningful and disentangled subsets from the input set and thereafter provides representative subsets for evidence comparison. 
        }
        \label{fig:reasoning}
    \end{center}
\end{figure*}

\subsection{Salary-influential Skill Set Analysis}
\subsubsection{Representative Skill Sets}
As with our design, LGDESetNet models the combinatorial market-aware salary influences of skill compositions by learning the global prototypical skill sets.
\Cref{fig:global_vis} (a) showcases IT-related skill sets with the top-5 highest average weights $\gamma^{\text{sal}}$ across the dataset.
It illustrates LGDESetNet's ability to highlight diverse job skills, ranging from software to hardware development.
Furthermore, skill sets' weight values indicate their market value for salary prediction, with AI and programming skills valued more than front-end skills, aligning with current labor market trends.
By quantitative comparison with representative skill sets, talents can explicitly capture the labor market demands, identify skills gaps by comparison, and reasonably plan careers.

\subsubsection{Skill Set Influence Concerning Different Job Contexts}
We delve into the dynamic trends of prototypical skill sets across various job contexts, including time, work experience, and service business. 
By observing the fluctuations of prototype weight value, we can achieve many interesting insights into the labor market trends.
From \Cref{fig:global_vis} (b), it's evident that prototypical skill sets show an upward trend, highlighting an increasing demand for IT talents. Interestingly, there is a noticeable downturn in the first half of 2019, which is consistent with the global economic downturn's impact.
\Cref{fig:global_vis} (c) illustrates how work experience influences salary dynamics within job skill sets.
For graduates, development-related skills could bring a higher initial salary. However, as work experience accumulates, the growth in salary slows down, hinting at a potential career ceiling. In contrast, algorithm-related skills demonstrate a more stable, enduring effect on salary progression.
We also investigate the dynamic influences of service industry shown in \Cref{fig:global_vis} (d). %
For example, data analysis and backend development commanding higher median salaries due to their critical roles in managing and interpreting data, while hardware development is often underestimated by IT companies.

\subsection{Case study}

In \Cref{fig:reasoning}, we qualitatively visualize LGDESetNet's transparent reasoning process through a case study on a full-stack web development job posting. 
Initially, LGDESetNet identifies essential skill subsets from varied semantic angles—frontend, backend, machine learning, and interaction design.
These extracted skill subsets are further compared with the global prototypical skill sets to calculate the similarities, which represents the corresponding semantic relevance. 
For example, the prototypical skill set \{\emph{``Frontend Development''}, \emph{``Framework''}, \emph{``JavaScript''}, \emph{``HTML''}, \emph{``CSS''}, \emph{``PHP''}\} has high similarity scores with the extracted subset 1, 2, and 4, which implies the high relevance of \emph{``Frontend Development''} skills in the job posting.
In the generalized additive aggregation module, these similarity scores are weighted and summed together to get a final predicted salary value. 
We can observe that LGDESetNet can highlight the most salary-influential patterns of skill sets. 
The skill topic \emph{``Frontend Development''} primarily aligns with the input set that has the highest similarity scores. Moreover, it significantly influences salary prediction. 
While IT-related skills such as \emph{``Algorithms''} and \emph{``Data Processing \& Analysis''}, can bring a high potential salary (high $\gamma^\text{sal}$), they have low relevance to web development and thereafter have fewer contributions to the outcome. 
This case study highlights LGDESetNet's ability to clarify the salary reasoning process, with potential applications for various audiences. Recruiters may set equitable salaries, job seekers align skills with market trends, and policymakers inform education and workforce policies.

\begin{table}[t]
    \centering
    \caption{{User study results: 5-point Likert scale ratings of three
    measured features for explanatory methods.}}
    \begin{adjustbox}{max width=\linewidth}
    \begin{tabular}{l|ccc}
    \toprule[1.0pt]
    Methods         & Black box & ProtoPNet & LGDESetNet \\ \midrule
    Understanding   &       3.742         &       3.927             &   \textbf{4.421}         \\
    Trust &     3.385          &     3.521               &  \textbf{4.220} \\
    Usability &     3.046            &     4.028               &  \textbf{4.623}         \\ \bottomrule[1.0pt]
    \end{tabular}
    \end{adjustbox}
    \label{tab:user_study}
\end{table}

\subsection{User Study}
Following the design rationale of~\cite{rong2023towards}, we conduct a user study to evaluate LGDESetNet's explainability by accessing the user's comprehension of the model's reasoning process (Understanding), their confidence and comfort with the recommendations (Trust), and the intuitiveness and ease of use of provided explanations (Usability).
Specifically, we recruited 53 postgraduate students and staffs with the background of computer science or engineering from different universities for evaluation.
We compare our network to a full black box (HSBMF), where people are simply told ``a predicted result produced by a trained machine learning model'' in text form, and a representative prototyping model (ProtoPNet), where the learned prototypical skill sets are presented as explanations. 
For fair comparison, all the designed questions stem from each model's inherent modeling procedure. As shown in~\Cref{tab:user_study}, We can observe that self-explainable models increase user confidence by providing the evidence of model inference. It shows the necessity of our efforts to qualify the impacts of job skills. Moreover, traditional prototyping methods fail to provide semantically dense skill sets, which might confuse and confound users. In contrast, LGDESetNet can enhance user experience by offering clear, separate skill compositions and using prototypical examples as comparative evidence.

\vspace{-1mm}
\section{Conclusion}

In this study, we addressed the challenge of understanding how job skills
influence job salaries.
We introduced a novel intrinsically explainable set-based neural prototyping approach, LGDESetNet, to assess the impact of skill compositions on job salary, offering insights into disentangled skill sets influencing salary from a local and global viewpoint. Specifically, our approach leveraged a disentangled discrete subset selection module to pinpoint multi-faceted influential subsets with diverse semantics. 
In addition, we proposed a prototypical set learning method to distill globally influential skill sets. The final output transparently showcased the semantic relationships between these subsets and prototypes. 
Extensive evaluations on real-world datasets highlighted the effectiveness and explainability of LGDESetNet as a pioneering self-explainable neural methodology for processing skill-based salary prediction.

\begin{acknowledgement}
  This work is partly supported by the National Key Research and Development Program of China (No. 2023YFF0725001), the National Natural Science Foundation of China (No. 62306255, 92370204, 62176014), the Natural Science Foundation of Guangdong Province (No. 2024A1515011839), the Fundamental Research Project of Guangzhou (No. 2024A04J4233), the Guangzhou-HKUST(GZ) Joint Funding Program (No.2023A03J0008), and the Education Bureau of Guangzhou Municipality. 
\end{acknowledgement}

\section*{Appendixes}
\addcontentsline{toc}{section}{Appendixes} %
\renewcommand{\thesection}{Appendix \Alph{section}}
\setcounter{section}{0}

\section{Proof of Permutation Invariance of LGDESetNet}
\label{appendix:proof}

\begin{lemma}(Permutation matrix and permutation function)
$\forall X \in \mathbb{R}^{N \times N}$, for all permutation matrix $P$ of size $N$, there exists $\pi: \{1,2,...,N\} \rightarrow \{1,2,...,N\}$ is a permutation function, which satisfies:
$$X_{ij} = (P X)_{\pi(i)j} = (X  P^T)_{i\pi(j)} = (P X  P^T)_{\pi(i)\pi(j)}$$
\end{lemma}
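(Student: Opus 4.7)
The plan is to establish the bijective correspondence between permutation matrices and permutation functions, then verify the three claimed identities by direct computation with Kronecker-delta expansions. First I would fix the convention: given a permutation matrix $P$ of size $N$, there is a unique $\pi:\{1,\ldots,N\}\to\{1,\ldots,N\}$ such that $P_{ij} = \delta_{i,\pi(j)}$, i.e.\ $P_{\pi(j),j}=1$ and all other entries of column $j$ vanish. This $\pi$ is well-defined because each column of $P$ contains exactly one $1$, and $\pi$ is a bijection (hence a permutation) because each row of $P$ also contains exactly one $1$, which rules out $\pi(j_1)=\pi(j_2)$ for $j_1\neq j_2$.

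Next, I would verify the first identity by unfolding the matrix product. Using $P_{ik}=\delta_{i,\pi(k)}$, one computes
\begin{equation*}
(PX)_{ij} \;=\; \sum_{k=1}^{N} P_{ik} X_{kj} \;=\; \sum_{k=1}^{N} \delta_{i,\pi(k)} X_{kj} \;=\; X_{\pi^{-1}(i),\,j}.
\end{equation*}
Substituting $i \mapsto \pi(i)$ collapses $\pi^{-1}(\pi(i))$ to $i$, yielding $(PX)_{\pi(i),j} = X_{ij}$. For the second identity, the same calculation applied to $P^{T}$ (whose entries satisfy $P^{T}_{kj} = P_{jk} = \delta_{j,\pi(k)}$) gives $(XP^{T})_{ij} = X_{i,\pi^{-1}(j)}$, and then substituting $j \mapsto \pi(j)$ yields $(XP^{T})_{i,\pi(j)} = X_{ij}$.

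For the third identity, I would simply chain the previous two. Writing $Y := PX$, we have $Y_{\pi(i),j} = X_{ij}$ from the first step, and applying the second step to the product $YP^{T}$ (with left index fixed at $\pi(i)$ and right index transformed by $j \mapsto \pi(j)$) gives $(YP^{T})_{\pi(i),\pi(j)} = Y_{\pi(i),j} = X_{ij}$, i.e.\ $(PXP^{T})_{\pi(i),\pi(j)} = X_{ij}$.

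The main obstacle is purely notational: the Kronecker-delta conventions for permutation matrices vary between sources (some authors use $P_{ij} = \delta_{j,\pi(i)}$), and the three identities only line up cleanly under one convention. I would therefore spend the opening paragraph of the proof pinning down the convention unambiguously and briefly justifying that $\pi$ is a bijection; after that, each identity reduces to a one-line index substitution and the rest of the argument is essentially mechanical.
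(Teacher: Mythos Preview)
Your argument is correct and complete: fixing the convention $P_{ij}=\delta_{i,\pi(j)}$, expanding each product via Kronecker deltas, and then substituting $i\mapsto\pi(i)$ or $j\mapsto\pi(j)$ yields the three identities exactly as you describe. The observation about the alternative convention $P_{ij}=\delta_{j,\pi(i)}$ is also well placed, since the identities would otherwise hold with $\pi^{-1}$ in place of $\pi$.

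By way of comparison, the paper does not actually supply a proof for this lemma: it is stated as a background fact relating permutation matrices to permutation functions and then invoked in the proof of the subsequent lemma about $\text{Sigmoid}$. So you have gone further than the paper here, giving a self-contained derivation where the paper simply asserts the correspondence. Your approach is the natural one and there is nothing to criticize; if anything, you could shorten the write-up by noting that the third identity follows immediately from composing the first two, as you already do.
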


\begin{lemma}
Given $X \in \mathbb{R}^{N \times N}$ and any permutation matrix $P$ of size $N$, we have 
$$\text{Sigmoid}(P X  P^T) = P  \text{Sigmoid}(X)  P^T.$$
\end{lemma}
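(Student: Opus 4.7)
The plan is to prove the identity entrywise, exploiting the fact that $\text{Sigmoid}$ is a scalar function applied element-wise and that conjugation by a permutation matrix merely relabels indices. Concretely, I will fix an arbitrary pair of indices $(a,b)$ and show that the $(a,b)$ entry of the left-hand side coincides with the $(a,b)$ entry of the right-hand side.

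First, I would invoke Lemma 1 to relate entries of $P X P^T$ to entries of $X$. Writing $\pi$ for the permutation function associated with $P$, Lemma 1 yields $(P X P^T)_{\pi(i)\pi(j)} = X_{ij}$ for every $i,j$. Equivalently, by substituting $i = \pi^{-1}(a)$ and $j = \pi^{-1}(b)$, we get $(P X P^T)_{ab} = X_{\pi^{-1}(a)\,\pi^{-1}(b)}$. Since $\text{Sigmoid}$ acts component-by-component, it immediately follows that
\begin{equation*}
    \bigl[\text{Sigmoid}(P X P^T)\bigr]_{ab} = \text{Sigmoid}\bigl(X_{\pi^{-1}(a)\,\pi^{-1}(b)}\bigr).
\end{equation*}

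Second, I would process the right-hand side by the same index-tracking argument, but applied to the matrix $\text{Sigmoid}(X)$ rather than $X$. Let $Y := \text{Sigmoid}(X)$, so that $Y_{ij} = \text{Sigmoid}(X_{ij})$ by definition. Applying Lemma 1 to $Y$ (now with the matrix $Y$ in place of $X$ in the lemma), we obtain $(P Y P^T)_{ab} = Y_{\pi^{-1}(a)\,\pi^{-1}(b)} = \text{Sigmoid}\bigl(X_{\pi^{-1}(a)\,\pi^{-1}(b)}\bigr)$. Comparing this with the expression derived in the previous paragraph gives equality of the $(a,b)$ entries, and since $(a,b)$ was arbitrary the matrix identity follows.

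I do not anticipate a real obstacle here: the substance of the argument is a single application of Lemma 1 on each side, combined with the purely element-wise action of $\text{Sigmoid}$. The only place where care is needed is in keeping the direction of the permutation straight — namely, that conjugation $X \mapsto P X P^T$ sends the $(i,j)$ entry to position $(\pi(i),\pi(j))$, so querying position $(a,b)$ pulls back to position $(\pi^{-1}(a),\pi^{-1}(b))$. Once this indexing convention is fixed consistently on both sides, the proof is essentially a one-line calculation. The same argument would also work for any other element-wise scalar function, which clarifies why the lemma is a generic commutation property rather than something specific to $\text{Sigmoid}$.
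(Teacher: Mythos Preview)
Your proof is correct and follows essentially the same route as the paper: both argue entrywise, invoking Lemma~1 to track how conjugation by $P$ permutes indices and then using that $\text{Sigmoid}$ acts element-wise. The only cosmetic difference is that the paper indexes the common entry as $(\pi(i),\pi(j))$ whereas you index it as $(a,b)$ and pull back via $\pi^{-1}$, but this is the same computation written in two equivalent ways.
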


\begin{proof}
We have $\text{Sigmoid}(X)_{ij} = \frac{1}{1 + e^{-X_{ij}}}$ and now consider the permutation function $\pi$ for $P$. Following the lemma 1, we can get:
\begin{equation*}
   \begin{split}
       (P  \text{Sigmoid}(X)  P^T)_{\pi(i)\pi(j)} & = \text{Sigmoid}(X)_{ij} \\
                                                        & = \frac{1}{1 + e^{-X_{ij}}} \\
                                                        & = \frac{1}{1 + e^{-(P X  P^T)_{\pi(i)\pi(j)}}} \\
                                                        & = \text{Sigmoid}(P  X  P^T)_{\pi(i)\pi(j)}.
   \end{split}
\end{equation*}
which implies $P  \text{Sigmoid}(X)  P^T = \text{Sigmoid}(P  X  P^T)$.
\end{proof}

As presented before, we inject the Gumbel noises into the Sigmoid function for enabling a differentiable set element sampling process. Given a weight matrix $X$, this Gumbel-Sigmoid Sampler (GSS) can be formulated as:
\begin{equation*}
    \begin{split}
        \operatorname{GSS}(X) = \frac{\exp((X + G_0)/\tau)}{\exp((X + G_0)/\tau) + \exp((G_1)/\tau)}.
    \end{split}
\end{equation*}

\begin{lemma}
Given $X \in \mathbb{R}^{N \times N}$ and any permutation matrix $P$ of size $N$, we have 
$$\operatorname{GSS}(PXP^T) = P \operatorname{GSS}(X) P^T.$$
\begin{proof}
Considering the GSS function only adds element-wise operations on Sigmoid, it does not change the original permutation property of Sigmoid. 
\end{proof}
\end{lemma}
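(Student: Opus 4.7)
The plan is to reduce Lemma 3 to Lemma 2 by observing that $\operatorname{GSS}$ is constructed entirely from element-wise operations on its matrix argument, so the commutation-with-permutation property established for Sigmoid automatically transfers. Concretely, the operations composing $\operatorname{GSS}$ — scalar addition with the Gumbel noises $G_0, G_1$, division by $\tau$, the exponential, and the final pointwise ratio — all act independently on each entry. Hence there is a scalar map $g$ such that $\operatorname{GSS}(X)_{ij} = g(X_{ij})$, drawing two independent Gumbel samples per entry.

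First, I would record this pointwise structure explicitly and let $\pi$ be the permutation function associated with $P$, as in Lemma 1. Second, I would compute both sides entrywise at position $(\pi(i),\pi(j))$. Using Lemma 1, $(PXP^T)_{\pi(i)\pi(j)} = X_{ij}$, so
\begin{equation*}
\operatorname{GSS}(PXP^T)_{\pi(i)\pi(j)} \;=\; g\bigl((PXP^T)_{\pi(i)\pi(j)}\bigr) \;=\; g(X_{ij}) \;=\; \operatorname{GSS}(X)_{ij}.
\end{equation*}
On the other side, Lemma 1 again gives $(P \operatorname{GSS}(X) P^T)_{\pi(i)\pi(j)} = \operatorname{GSS}(X)_{ij}$. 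Since $\pi$ is a bijection on $\{1,\dots,N\}$, the two matrices agree at every entry, yielding the claimed identity. This is exactly the same skeleton used in the proof of Lemma 2; only the scalar map changes from the Sigmoid to $g$.

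The one conceptual subtlety — and the closest thing to an obstacle — is that the Gumbel noises $G_0, G_1$ are random, so strictly speaking the identity must be read either (i) pathwise, when $G_0$ and $G_1$ are attached to matrix positions and are correspondingly permuted with $X$, or (ii) in distribution, since the Gumbel samples are drawn i.i.d.\ per entry and the permuted collection has the same joint law as the original. Either reading is sufficient for the downstream permutation-invariance argument for the full LGDESetNet pipeline, because what is ultimately needed is that the distribution of the selection mask transforms equivariantly under permutations of the input. I would state this interpretation explicitly in a short remark so that subsequent compositions of $\operatorname{GSS}$ with the other permutation-equivariant layers go through without ambiguity.
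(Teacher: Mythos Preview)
Your proof is correct and follows essentially the same route as the paper's: both reduce the claim to the observation that $\operatorname{GSS}$ is built from element-wise operations on its matrix argument, so the permutation-equivariance established for Sigmoid in Lemma~2 carries over unchanged. Your explicit entrywise computation via the bijection $\pi$ and the remark on the pathwise-versus-in-distribution reading of the random Gumbel noises are welcome additions that the paper's one-line argument omits.
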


\begin{figure*}[t]
    \centering

    \begin{minipage}[t]{0.23\linewidth}
        \centering
        \includegraphics[width=\linewidth]{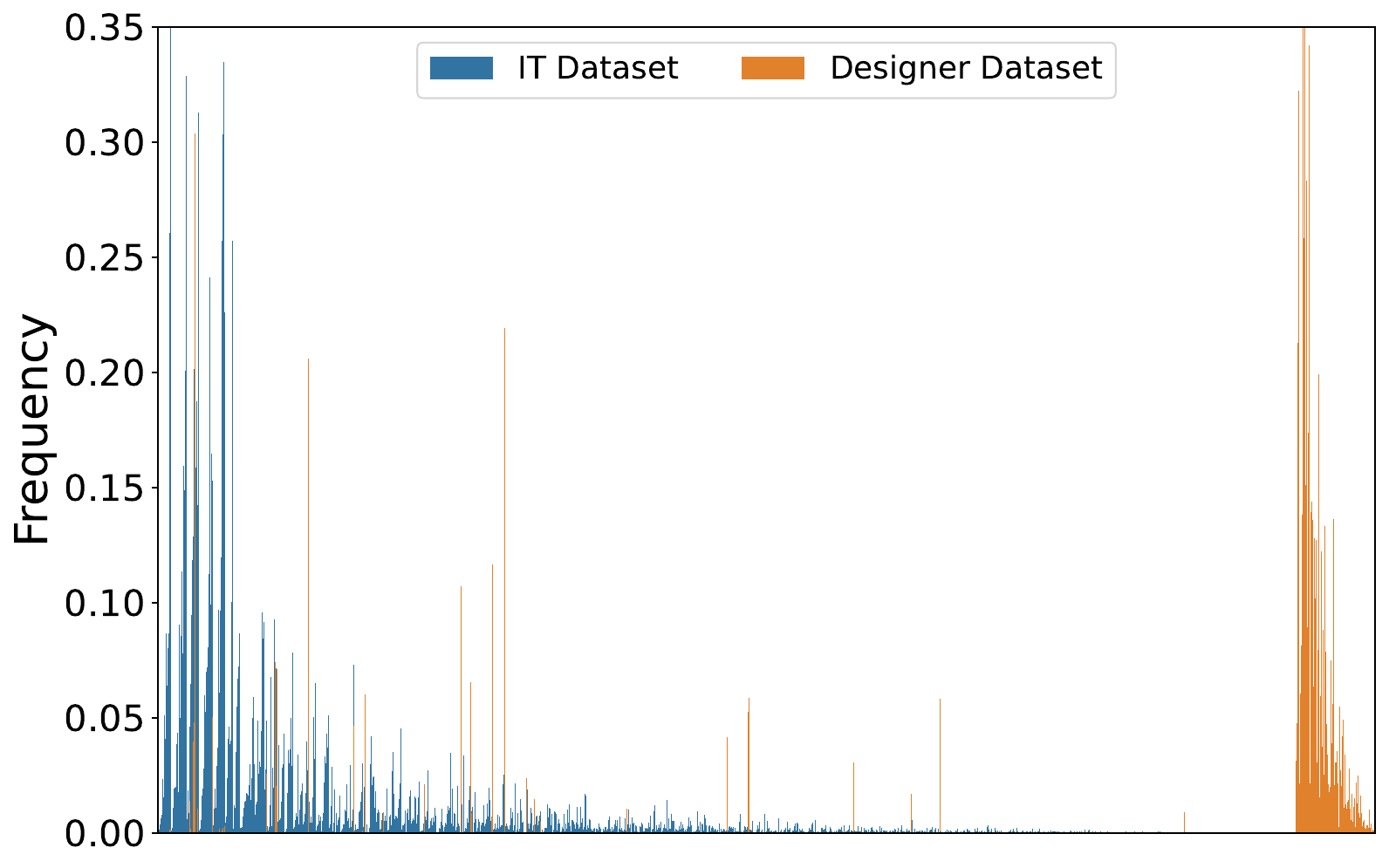}
        
        \small\textbf{(a)} Skill distribution
    \end{minipage}
    \hspace{0.01\linewidth}
    \begin{minipage}[t]{0.23\linewidth}
        \centering
        \includegraphics[width=\linewidth]{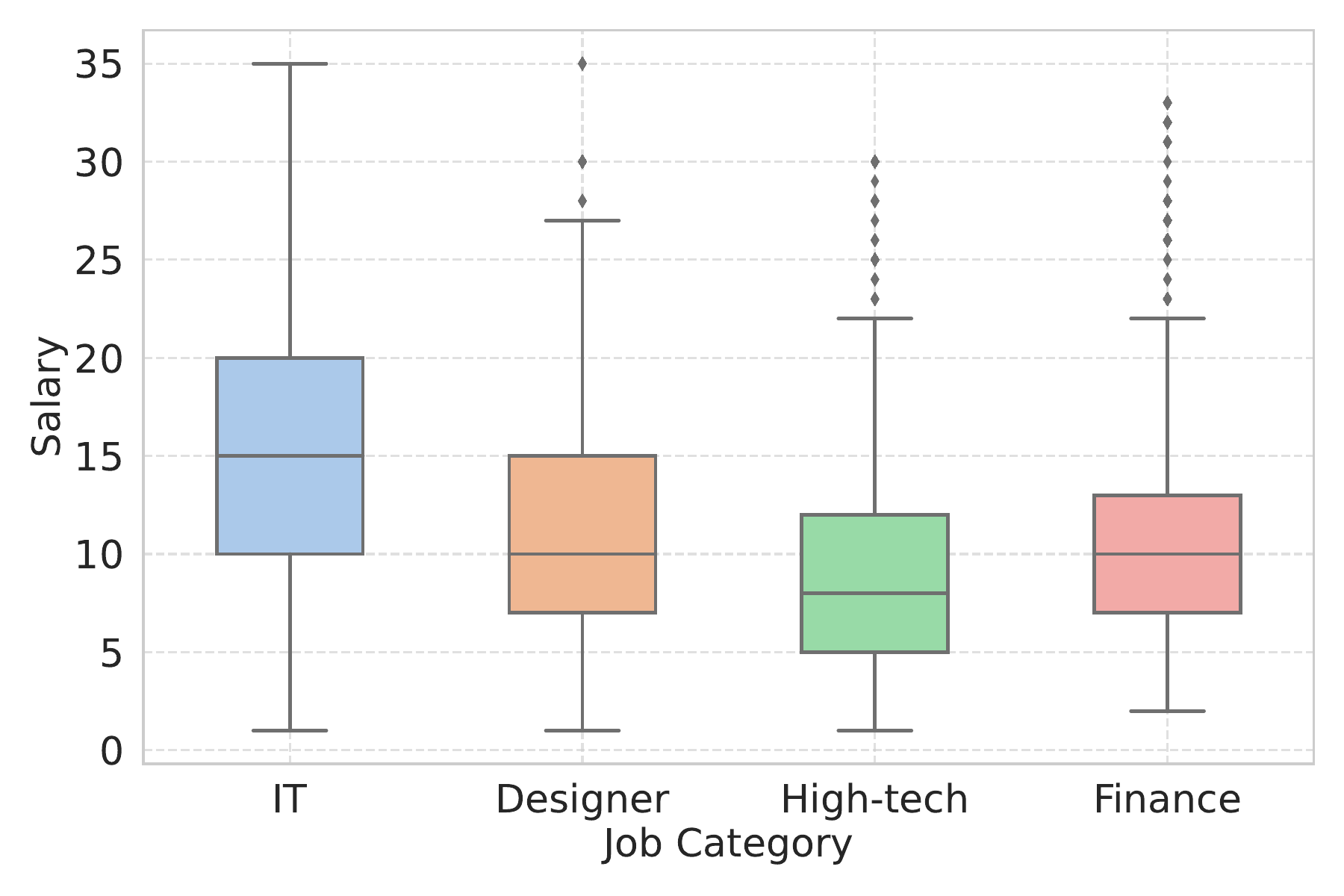}

        \small\textbf{(b)} Salary distribution
    \end{minipage}
    \hspace{0.01\linewidth}
    \begin{minipage}[t]{0.23\linewidth}
        \centering
        \includegraphics[width=\linewidth]{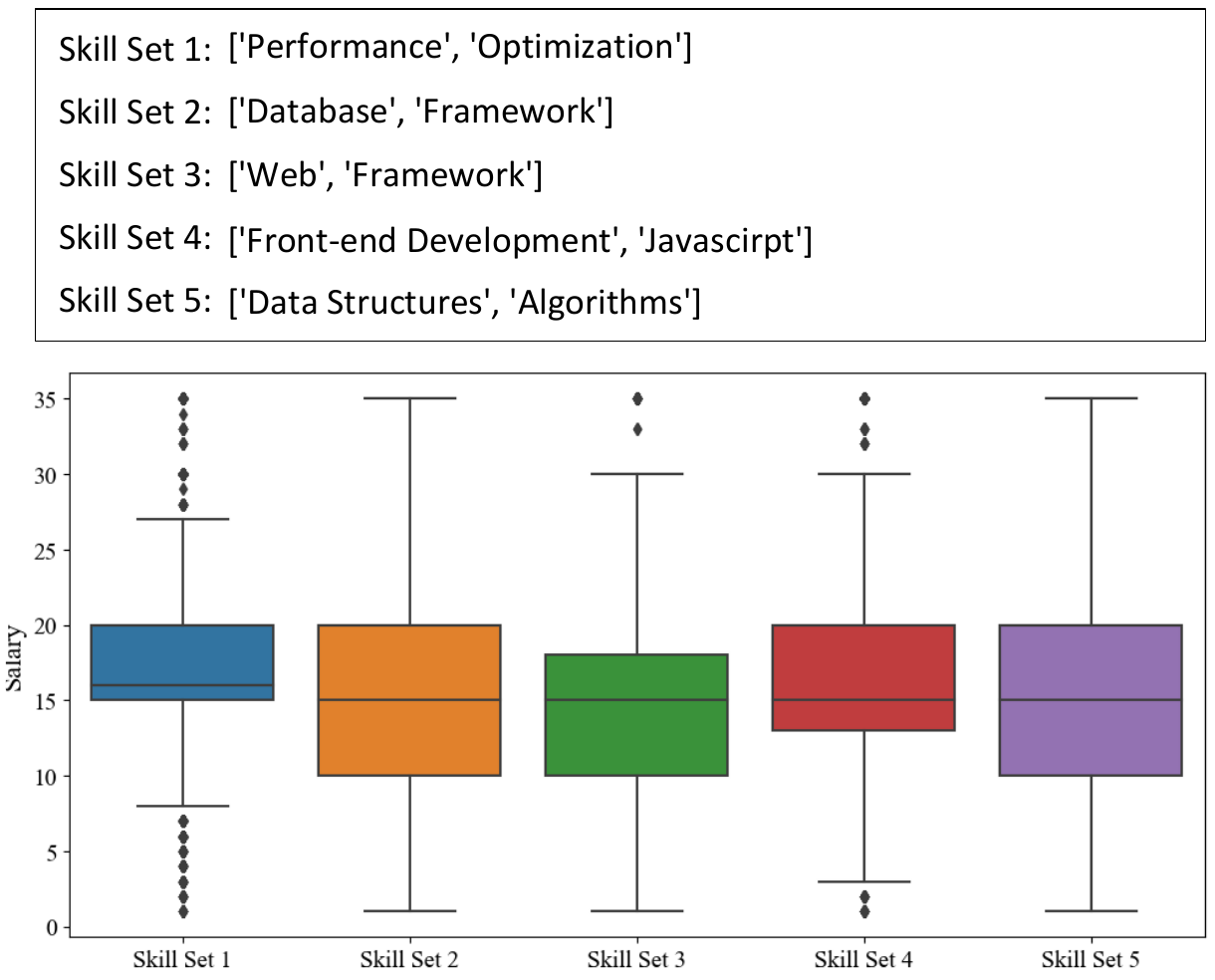}

        \small\textbf{(c)} Frequency analysis
    \end{minipage}
    \hspace{0.01\linewidth}
    \begin{minipage}[t]{0.23\linewidth}
        \centering
        \includegraphics[width=\linewidth]{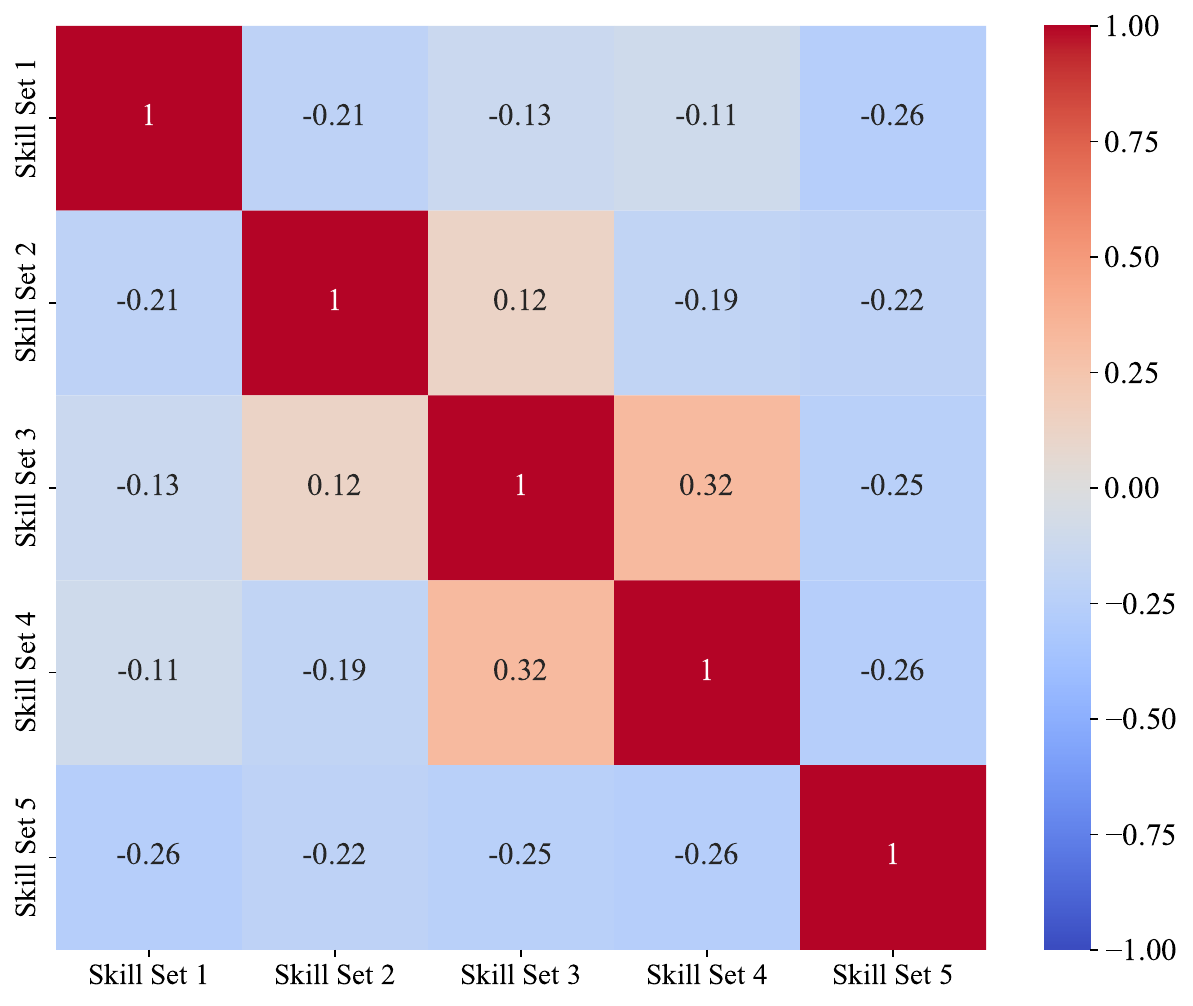}

        \small\textbf{(d)} Spearman correlation
    \end{minipage}

    \caption{Dataset statistics and analysis of frequent skill sets (minSup $\geq$ 0.01).}
    \label{fig:stat}
\end{figure*}

\begin{figure*}[htbp]
    \begin{center}
        \includegraphics[width=0.8\linewidth]{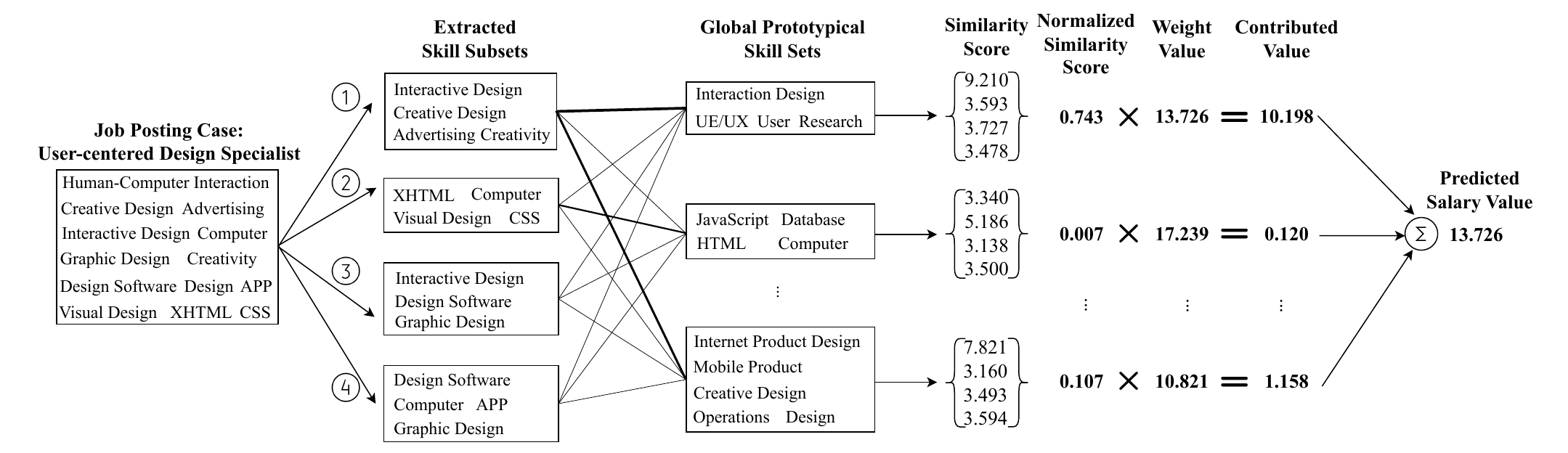}
        \caption{Reasoning process of LGDESetNet on Designer job posting.}
        \label{fig:additional-case-study}
        \vspace{-4mm}
    \end{center}
\end{figure*}

\begin{table*}[t]
    \centering
    \caption{Additional performance evaluation of SOTA set-based black-box models. LGDESetNet achieves comparable performance to the best-performing black-box baselines, with slight improvements. The best results are highlighted in bold.}
    \resizebox{\linewidth}{!}{%
    \begin{tabular}{l|cccccccc}
    \toprule[0.5pt]
    \multirow{4}{*}{Methods}
    & \multicolumn{4}{c}{Skill-Salary Job Postings~\cite{sun2021market}} & \multicolumn{4}{c}{Job Salary Benchmarking~\cite{meng2022fine}} \\
    \cmidrule(r){2-5} \cmidrule(r){6-9}
    & \multicolumn{2}{c}{IT} & \multicolumn{2}{c}{Designer} & \multicolumn{2}{c}{High-tech} & \multicolumn{2}{c}{Finance} \\ \cmidrule(r){2-9}
    & RMSE ($\downarrow$) & MAE ($\downarrow$) & RMSE ($\downarrow$) & MAE ($\downarrow$) & RMSE ($\downarrow$) & MAE ($\downarrow$) & RMSE ($\downarrow$) & MAE ($\downarrow$) \\
    \midrule
    $\text{DeepSets}$ & $\text{4.271}_{\pm \text{0.020}}$ & {$\text{3.260}_{\pm \text{0.021}}$} & {$\text{3.504}_{\pm \text{0.058}}$} & {$\text{2.583}_{\pm \text{0.033}}$} & {$\text{3.361}_{\pm \text{0.064}}$} & {$\text{2.480}_{\pm \text{0.051}}$} & {$\text{3.835}_{\pm \text{0.075}}$} & $\text{2.859}_{\pm \text{0.067}}$ \\
    Set Transformer & {$\text{4.245}_{\pm \text{0.048}}$} & $\text{3.278}_{\pm \text{0.024}}$ & $\text{3.627}_{\pm \text{0.166}}$ & $\text{2.702}_{\pm \text{0.181}}$ & $\text{3.440}_{\pm \text{0.065}}$ & $\text{2.563}_{\pm \text{0.063}}$ & $\text{3.873}_{\pm \text{0.103}}$ & {$\text{2.883}_{\pm \text{0.091}}$} \\ 
    SetNorm & {$\text{4.231}_{\pm \text{0.057}}$} & $\text{3.262}_{\pm \text{0.023}}$ & $\text{3.533}_{\pm \text{0.042}}$ & $\text{2.627}_{\pm \text{0.028}}$ & $\text{3.414}_{\pm \text{0.049}}$ & $\text{2.538}_{\pm \text{0.028}}$ & $\text{3.874}_{\pm \text{0.036}}$ & {$\text{2.799}_{\pm \text{0.030}}$} \\ \midrule
    LGDESetNet & \textbf{${\text{4.162}_{\pm \text{0.012}}}$} & \textbf{${\text{3.141}_{\pm \text{0.041}}}$} & \textbf{${\text{3.473}_{\pm \text{0.058}}}$} & \textbf{${\text{2.559}_{\pm \text{0.062}}}$} & \textbf{${\text{3.327}_{\pm \text{0.038}}}$} & \textbf{${\text{2.434}_{\pm \text{0.037}}}$} & \textbf{${\text{3.775}_{\pm \text{0.046}}}$} & \textbf{${\text{2.768}_{\pm \text{0.031}}}$} \\
    \bottomrule[0.5pt]
    \end{tabular}}
    \label{table:lower_performance_additional}
\end{table*}

\setcounter{prop}{0}
\begin{prop}
    Given the input skill vector $V_X$ in any permutation order of the skill set $X$, the $h$-$th$ view of the multi-view subset selection network $\operatorname{MSSN}_h(V_X) = \text{pool}\left(\operatorname{GSS}\left(V_h^\text{Q}(V_h^\text{K})^T/\sqrt{d}\right) \cdot V^\text{V}_h\right)$ is a permutation-invariant function.
\begin{proof}
For the dot-product attention mechanism~\cite{vaswani2017attention}, we have $V_h^\text{Q} = V_X W^\text{Q}_h$, $V_h^\text{K} = V_X W^\text{K}_h$, and $V_h^\text{V} = V_X W^\text{V}_h$. Therefore, we can get
\begin{equation*}
    \begin{split}
        & \operatorname{MSSN}_h(P V_X) \\ 
            & = \text{pool}\left(\operatorname{GSS}(V_h^\text{Q}(V_h^\text{K})^T/\sqrt{d}) V_h^\text{V}\right) \\
            & = \text{pool}\left(\operatorname{GSS}(P V_X W^\text{Q}_h (P V_X W^\text{K}_h)^T /\sqrt{d}) P V_X W^\text{V}_h\right) \\
            & = \text{pool}\left(\operatorname{GSS}(P (V_X W^\text{Q}_h) (V_X W^\text{K}_h)^T P^T /\sqrt{d}) P V_X W^\text{V}_h\right) \\
            & = \text{pool}\left(P \operatorname{GSS}((V_X W^\text{Q}_h) (V_X W^\text{K}_h)^T/\sqrt{d}) P^T P (V_X W^\text{V}_h)\right) \\
            & = \text{pool}\left(P \operatorname{GSS}((V_X W^\text{Q}_h) (V_X W^\text{K}_h)^T/\sqrt{d}) (V_X W^\text{V}_h)\right) \\
            & = \operatorname{MSSN}_h(V_X)
    \end{split}
\end{equation*}
which implies the multi-view subset selection network is permutation-invariant by reason that the pooling function satisfies the property of permutation invariance.
\end{proof}
\end{prop}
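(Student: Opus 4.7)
The plan is to show that applying any permutation matrix $P$ to the input $V_X$ leaves $\operatorname{MSSN}_h(V_X)$ unchanged, by propagating $P$ through each of the three projections $V_h^{\text{Q}}, V_h^{\text{K}}, V_h^{\text{V}}$, then using Lemma 3 to commute $P$ past the Gumbel–Sigmoid Sampler, and finally using the permutation-invariance of the pooling function to discard it. The inputs into this argument are the linearity of the Q/K/V projections, the commutation property $\operatorname{GSS}(P X P^{T}) = P\,\operatorname{GSS}(X)\,P^{T}$ from Lemma 3, the orthogonality identity $P^{T} P = I$, and the assumed permutation-invariance of $\text{pool}(\cdot)$.

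First, I would substitute $V_h^{\text{Q}} = V_X W_h^{\text{Q}}$, $V_h^{\text{K}} = V_X W_h^{\text{K}}$, and $V_h^{\text{V}} = V_X W_h^{\text{V}}$ into the definition and replace $V_X$ with $P V_X$. By linearity, each projection gains a left factor of $P$, giving $P V_X W_h^{\text{Q}}$, $P V_X W_h^{\text{K}}$, and $P V_X W_h^{\text{V}}$. Next I would observe that the key transpose produces $(P V_X W_h^{\text{K}})^{T} = (V_X W_h^{\text{K}})^{T} P^{T}$, so the pre-activation weight matrix inside the GSS becomes $P\bigl[(V_X W_h^{\text{Q}})(V_X W_h^{\text{K}})^{T}/\sqrt{d}\bigr] P^{T}$, matching the sandwich form required by Lemma 3.

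Second, I would apply Lemma 3 to pull $P$ and $P^{T}$ outside of GSS, so that the expression becomes $\text{pool}\bigl(P\,\operatorname{GSS}(\cdot)\,P^{T}\cdot P V_X W_h^{\text{V}}\bigr)$. Then I would collapse $P^{T} P = I$ in the middle, leaving $\text{pool}\bigl(P\,\operatorname{GSS}(\cdot)\,V_X W_h^{\text{V}}\bigr)$. Since $\text{pool}$ is permutation-invariant with respect to the row ordering of its argument, the outer $P$ is absorbed, yielding exactly $\operatorname{MSSN}_h(V_X)$.

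The only genuinely delicate step is the second one: the cancellation $P^{T} P = I$ hinges on identifying the correct adjacency between the right-hand factor of $\operatorname{GSS}$ and the newly permuted value matrix, which in turn requires that the attention weight matrix be truly sandwiched as $P(\cdot)P^{T}$ rather than just left-multiplied by $P$. This is the reason the authors introduced Lemma 3 in the precise two-sided form. I anticipate no other subtlety beyond bookkeeping of matrix shapes, and the conclusion then follows directly from the permutation-invariance of $\text{pool}$.
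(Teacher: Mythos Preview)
Your proposal is correct and follows essentially the same approach as the paper's own proof: substitute the linear Q/K/V projections, rewrite the permuted attention scores as the sandwich $P(\cdot)P^{T}$, invoke Lemma~3 to commute $P$ past $\operatorname{GSS}$, cancel $P^{T}P=I$, and finish with the permutation-invariance of $\text{pool}$. You even identify the same delicate point (the two-sided form of Lemma~3) that the paper relies on.
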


\begin{prop}
LGDESetNet satisfies the property of permutation invariance, that means given the input skill vector $V_X \in \mathbb{R}^n$ for any permutation matrix $P$ of size $n$, we can always have
$$\operatorname{LGDESetNet}(P  V_X) = \operatorname{LGDESetNet}(V_X)$$

\begin{proof}
Note that in LGDESetNet, we append a prototypical part network to the multi-view subset selection network. This network compares the extracted subsets with prototypes for weighted similarity aggregation. Since the prototyping operation is directly added to each subset embeddings $Z^\text{s}_h = \mathcal{T}\left(E^{\text{s}}_{h}\right)$, it does not change the permutation property of the multi-view subset selection network. Consequently, we can get that our LGDESetNet satisfies the permutation invariance condition. 
\end{proof}
\end{prop}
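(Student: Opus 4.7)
The plan is to decompose $\operatorname{LGDESetNet}$ into a pipeline of stages and propagate a row permutation $P$ of $V_X$ through each stage in turn, showing that the permutation is absorbed already at the output of the multi-view subset selection network. First, I would observe that $V_X$ enters the network \emph{only} through the multi-view subset selection stage; every downstream computation operates on the resulting subset embeddings $\{E^{\text{s}}_h\}_{h=1}^H$ together with quantities that do not depend on the order of $V_X$, namely the prototype parameters $(\gamma^{\text{s}}_i,\gamma^{\text{lv}}_i)$, the transformation layer $\mathcal{T}$, and the context-driven salary weights $\gamma^{\text{sal}}$ (which are a function of the set-wise side information $C$ alone). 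Thus invariance of the whole network reduces to invariance of each $E^{\text{s}}_h$ with respect to permutations of $V_X$.

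For this first step I would invoke Proposition~1, which already establishes $\operatorname{MSSN}_h(P V_X) = \operatorname{MSSN}_h(V_X)$. The gap to fill is that LGDESetNet actually uses the skill-weighted variant $E^{\text{s}}_h = \text{pool}\bigl(\{\alpha^{\text{lv}}_i\phi(x_i)\mid x_i\in S_h\}\bigr)$, where each scalar $\alpha^{\text{lv}}_i$ is produced by the weight calibrator from the side information $lv_i$ attached to skill $x_i$. I would argue that $\alpha^{\text{lv}}_i$ is carried along with its owning skill under the permutation, so permuting $V_X$ by $P$ multiplies the value matrix $V^{\text{V}}_h$ by a diagonal weighting matrix that is itself permuted by $P$. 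Combining this with Lemma~3 (Gumbel--Sigmoid commutes with row permutations) shows that the weighted value matrix is left-multiplied by $P$; the permutation-invariant pooling layer then cancels it. The same reasoning applies to the discrete indicator $A^{\text{s}}_h$ that defines the subset $S_h$.

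The remaining stages follow routinely. Since $\mathcal{T}$ is applied vectorwise, $Z^{\text{s}}_h = \mathcal{T}(E^{\text{s}}_h)$ inherits invariance; the prototype projections $Z^{\text{p}}_i$ and the similarity scores $sim(S_h,P_i)$ do not involve $V_X$ except through $Z^{\text{s}}_h$; and the final aggregation
\[
y = \sum_{i=1}^M \gamma^{\text{sal}}_i \cdot \text{softmax}\!\left(\sum_{h=1}^H sim(S_h,P_i)\right)
\]
therefore depends on $V_X$ only through the already-invariant quantities $\{Z^{\text{s}}_h\}_{h=1}^H$, yielding $\operatorname{LGDESetNet}(PV_X)=\operatorname{LGDESetNet}(V_X)$.

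I expect the main obstacle to lie in the bookkeeping of step one: Proposition~1 is stated for a plain dot-product attention block without side-information weighting, so one must be explicit about how the calibrated weights $\alpha^{\text{lv}}_i$ and the Gumbel-sampled binary mask permute jointly with the rows of $V_X$. Once that equivariance is formally verified, the rest of the proof is a short composition argument combining a permutation-invariant map with several permutation-independent ones.
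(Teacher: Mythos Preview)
Your proposal is correct and follows essentially the same composition argument as the paper: Proposition~1 gives permutation invariance of each $E^{\text{s}}_h$, and the downstream prototypical part layer, transformation $\mathcal{T}$, similarity computation, and aggregation operate only on these already-invariant embeddings together with permutation-independent quantities. Your treatment is in fact more careful than the paper's, which does not explicitly address the skill-wise calibrator weights $\alpha^{\text{lv}}_i$; your observation that these weights are carried by the same permutation as their owning skills is exactly the missing bookkeeping, and once noted the argument goes through as you describe.
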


\section{Additional Experiments}

\subsection{Detailed Dataset Characteristics}
\label{appendix:dataset}
To better understand the dataset characteristics, we provide detailed dataset statistics and analysis of frequent skill sets in~\Cref{fig:stat}. It describes the skill and salary distributions of all four datasets. They present the unique characteristics of each dataset, such as the skill distribution and salary range. Moreover, we analyze the frequent skill sets (minSup $\geq$ 0.01) in the IT job posting dataset and present the salary distribution and Spearman correlation. The results show that the salary is significantly impacted by the skill sets, and the Spearman correlation indicates the co-appearance relationship of skill sets in job postings.

\subsection{Additional Case Study}
\label{appendix:case-study}
As presented in~\Cref*{fig:additional-case-study}, we provide an additional case study using a Designer-related job posting (e.g., User-centered Design Specialist).  It highlights how LGDESetNet can discern the most salary-impacting skills, noting that \emph{``User Research \& Interactivity''} aligns best with high similarity scores, significantly affecting salary predictions. Conversely, IT-related skills like  \emph{``JavaScript''} and  \emph{``Database''}, while potentially valuable, show low relevance to user-centered design roles and thus contribute less to salary estimations in that context. The study showcases LGDESetNet's direct image of the whole salary reasoning process, aiding individuals in understanding skill gaps for career advancement.

\bibliographystyle{fcs}
\bibliography{ref}

\end{document}